\newtheorem{theorem}{Theorem}[section]
\Crefname{theorem}{Theorem}{Theorems}
\numberwithin{theorem}{section}
\newaliascnt{mylemma}{theorem}
\newtheorem{mylemma}[mylemma]{Lemma}
\Crefname{mylemma}{Lemma}{Lemmas}
\newaliascnt{myproposition}{theorem}
\Crefname{myproposition}{Proposition}{Propositions}
\newaliascnt{mycorollary}{theorem}
\newtheorem{mycorollary}[mycorollary]{Corollary}
\Crefname{mycorollary}{Corollary}{Corollaries}
\newaliascnt{mydefinition}{theorem}
\newtheorem{mydefinition}[mydefinition]{Definition}
\Crefname{mydefinition}{Definition}{Definitions}
\newaliascnt{myremark}{theorem}
\Crefname{myremark}{Remark}{Remarks}
\newaliascnt{myconjecture}{theorem}
\Crefname{myconjecture}{Conjecture}{Conjectures}
\newaliascnt{myexample}{theorem}
\Crefname{myexample}{Example}{Examples}
\newaliascnt{definition}{theorem}
\newtheorem{definition}[definition]{Definition}
\crefname{definition}{Definition}{Definitions}
\newaliascnt{fact}{theorem}
\newtheorem{fact}[fact]{Fact}
\Crefname{fact}{Fact}{Facts}
\newaliascnt{claim}{theorem}
\newtheorem{claim}[claim]{Claim}
\Crefname{claim}{Claim}{Claims}
\newaliascnt{question}{theorem}
\Crefname{question}{Question}{Questions}
\newaliascnt{exercise}{theorem}
\Crefname{exercise}{Exercise}{Exercises}
\newaliascnt{notation}{theorem}
\Crefname{notation}{Notation}{Notations}
\newaliascnt{problem}{theorem}
\Crefname{problem}{Problem}{Problems}
\newaliascnt{remark}{theorem}
\newtheorem{remark}[remark]{Remark}
\Crefname{remark}{Remark}{Remarks}
\newcommand{\norm}[1]{\lVert#1\rVert}
\def\E{\mathbb E}
\newcommand{\R}{\mathbb R}
\newcommand\fsnorm[1]{\ensuremath{\norm{#1}_F^2}} 
\newcommand{\citep}{\cite}
\newcommand{\citet}{\cite}
\title{Generalization Bounds for Data-Driven Numerical Linear Algebra}
\author{%
 Peter Bartlett\thanks{\texttt{peter@berkeley.edu}}\\
 UC Berkeley%
 \and
 Piotr Indyk\thanks{\texttt{indyk@mit.edu}}\\
 MIT%
  \and
 Tal Wagner\thanks{\texttt{tal.wagner@gmail.com}}\\
 Microsoft Research%
}
\begin{document}

\maketitle

\begin{abstract}%
Data-driven algorithms can adapt their internal structure or parameters to inputs from unknown application-specific distributions, by learning from a training sample of inputs. 
Several recent works have applied this approach to problems in numerical linear algebra, obtaining significant empirical gains in performance. 
However, no theoretical explanation for their success was known. 

In this work we prove generalization bounds for those algorithms, within the PAC-learning framework for data-driven algorithm selection proposed by Gupta and Roughgarden (SICOMP 2017). 
Our main results are closely matching upper and lower bounds on the fat shattering dimension of the learning-based low rank approximation algorithm of Indyk et al.~(NeurIPS 2019).
Our techniques are general, and provide generalization bounds for many other recently proposed data-driven algorithms in numerical linear algebra, covering both sketching-based and multigrid-based methods. 
This considerably broadens the class of data-driven algorithms for which a PAC-learning analysis is available.
\end{abstract}

\section{Introduction}
Traditionally, algorithms are formally designed to handle a single unknown input. 
In reality, however, computational problems are often solved on multiple different yet related inputs over time. 
It is therefore natural to tailor the algorithm to the inputs it encounters, and leverage --- or \emph{learn} from --- past inputs, in order to improve performance on future ones. 

This paradigm addresses common scenarios in which we need to make design choices about the algorithm, such as setting parameters or selecting algorithmic components. It advocates making those design choices in an automated way, based on past inputs viewed as a training set, premised to share underlying similarities with future inputs.  
Rather than trying to model or gauge these similarities explicitly, the idea is to let a learning mechanism detect and exploit them implicitly. 
This is often referred to as \emph{self-improving}, \emph{data-driven} or \emph{learning-based} algorithm design, and by now has been applied successfully to a host of computational problems in various domains \citep{ailon2011self,gupta2017pac,balcan2020data,mitzenmacher2020algorithms}.

In particular, this approach has recently become popular in efficient algorithms for computational problems in linear algebra. Matrix computations are very widely used and are often hard to scale, leading to an enormous body of work on developing fast approximate algorithms for them. These algorithms often rely on internal auxiliary matrices, like the sketching matrix in sketch-based methods \citep{woodruff2014sketching} or the prolongation matrix in algebraic multigrid methods \citep{briggs2000multigrid}, and the choice of auxiliary matrix is crucial for good performance. Classically, these auxiliary matrices are chosen either at random (from carefully designed distributions which are oblivious to the input) or handcrafted via elaborate heuristic methods. However, a recent surge of work on learning-based linear algebra shows they can be successfully learned in an automated way from past inputs \citep{indyk2019learning,ailon2020sparse,liu2020learned,luz2020learning,indyk2021few}.


The empirical success of learning-based algorithms has naturally led to seeking theoretical frameworks for reasoning about them. \cite{gupta2017pac} suggested modeling the problem in terms of statistical learning, and initiated a PAC-learning theory for algorithm selection. In their formulation, the inputs are drawn independently from an unknown distribution $\mathcal D$, and the goal is to choose an algorithm from a given class $\mathcal L$, so as to approximately optimize the expected performance on $\mathcal D$. One then proves upper bounds on the \emph{pseudo-dimension} or the \emph{fat shattering dimension} of $\mathcal L$, which are analogs of the VC-dimension for real-valued function classes. By classical VC theory, such bounds imply \emph{generalization}, i.e., that an approximately optimal algorithm for $\mathcal D$ can be chosen from a bounded number of samples (proportional to either notion of dimension). Building on this framework, subsequent works have developed sets of tools for proving pseudo-dimension bounds for learning-based algorithms for various problems, focusing on combinatorial optimization and mechanism design \citep{balcan2017learning,balcan2018learning,balcan2018general,balcan2020refined,balcan2021much,balcan2021sample}.

However, existing techniques do not yield generalization bounds for the learning-based linear algebraic algorithms mentioned above, and so far, no generalization bounds were known for them. 

\paragraph{Our contribution.} 
We develop a new approach for proving PAC-learning generalization bounds for learning-based algorithms, which is applicable to linear algebraic problems. For concreteness, we mostly focus on the learning-based low rank approximation algorithm of \cite{indyk2019learning}, called IVY. 
Operating on input matrices of order $n\times n$, IVY learns an auxiliary sketching matrix specified by $n$ real parameters. 
Our main results is, loosely speaking, that the fat shattering dimension of IVY is $\widetilde\Theta(n)$, which yields a similar bound on its sample complexity.\footnote{Throughout, we use $\widetilde O(f)$ for $O(f\cdot\mathrm{polylog}(f))$.} 
The precise bound we prove also depends on the fatness parameter $\epsilon$, the target low rank $k$, and the row-dimension $m$ and sparsity $s$ of the learned sketching matrix; for now, it is instructive to think of all of them as constants. See \Cref{sec:results} for the full result.

We proceed to show that the tools we develop also lead to pseudo-dimension or fat shattering dimension bounds for various other learning-based linear algebraic algorithms known in the literature, which in addition to IVY include the learned low rank approximation algorithms of \cite{ailon2020sparse}, \cite{liu2020learned} and \cite{indyk2021few}, and the learned regression algorithm of \cite{luz2020learning}. 
These algorithms represent both the sketch-based and the multigrid-based approaches in numerical linear algebra. 

Our work significantly advances the line of research on the statistical foundations of learning-based algorithms, by extending the PAC-learning framework to a prominent and well-studied area of algorithms not previously covered by it, through developing a novel set of techniques.

\section{Background and Main Results}

\subsection{Generalization Framework for Learning-Based Algorithms}\label{sec:guptaroughgarden}
The following is the PAC-learning framework for algorithm selection, due to \cite{gupta2017pac}. 
Let $\mathcal X$ be a domain of inputs for a given computational problem, and let $\mathcal D$ be a distribution over $\mathcal X$. 
Let $\mathcal L$ be a class of algorithms that operate on inputs from $\mathcal X$. 
We identify each algorithm $L\in\mathcal L$ with a function $L:\mathcal X\rightarrow[0,1]$ that maps the result of running $L$ on $x$ to a loss $L(x)$, 
where we shift and normalize the loss to be in $[0,1]$ for convenience. 
Let $L^*\in\mathcal L$ be an optimal algorithm in expectation on $\mathcal D$, i.e., $L^* = \mathrm{argmin}_{L\in\mathcal L}\;\E_{x\sim \mathcal D}[L(x)]$. 

We wish to find an algorithm in $\mathcal L$ that approximately matches the performance of $L^*$. To this end, we draw $\ell$ samples $\tilde X=\{x_1,\ldots,x_\ell\}$ from $\mathcal D$, and apply a \emph{learning procedure} that maps $\tilde X$ to an algorithm $L\in\mathcal L$. 
We say that $\mathcal L$ is \emph{$(\epsilon,\delta)$-learnable} with $\ell$ samples if there is a learning procedure that maps $\tilde X$ to $L\in\mathcal L$ such that
\[ \Pr_{\tilde X}[\E_{x\sim D}[L(x)]\leq\E_{x\sim D}[L^*(x)] + \epsilon]\geq1-\delta . \]

A canonical learning procedure is \emph{Empirical Risk Minimization} (ERM), which maps $\tilde X$ to an algorithm
that minimizes the average loss over the samples, i.e., to $\mathrm{argmin}_{L\in\mathcal L}\;\frac1{\ell}\sum_{i=1}^{\ell}L(x_i)$. 
We say that $\mathcal L$ admits \emph{$(\epsilon,\delta)$-uniform convergence} with $\ell$ samples if
\[ \Pr_{\tilde X}\left[\forall L\in\mathcal L \;\; \left| \frac1{\ell}\sum_{i=1}^{\ell}L(x_i) - \E_{x\sim \mathcal D}[L(x)] \right| \leq \epsilon \right] \geq 1-\delta . \]
It is straightforward that $(\epsilon,\delta)$-uniform convergence implies $(2\epsilon,\delta)$-learnability with ERM with the same number of samples. To bound the number of samples, we have the following notions.

\begin{mydefinition}[pseudo-dimension and fat shattering dimension]\label{def:dim}
Let $X=\{x_1,\ldots,x_N\} \subset \mathcal X$. 
We say that $X$ is \emph{pseudo-shattered} by $\mathcal L$ if there are thresholds $r_1,\ldots,r_N\in\R$ such that,
\[
  \forall I\subset\{1,\ldots,N\} \; \exists L\subset\mathcal L \;\;\;\; \text{s.t.} \;\;\;\; L(x_i)>r_i \Leftrightarrow i\in I .
\]
For $\gamma>0$, we say that $X$ is \emph{$\gamma$-fat shattered} by $\mathcal L$ if there are thresholds $r_1,\ldots,r_N\in\R$ such that,
\[
  \forall I\subset\{1,\ldots,N\} \; \exists L\subset\mathcal L \;\;\;\; \text{s.t.} \;\;\;\; i\in I \Rightarrow L(x_i)>r_i+\gamma   \;\;\;\; \text{and} \;\;\;\; i\notin I \Rightarrow L(x_i)<r_i-\gamma.
\]
The \emph{pseudo-dimension} of $\mathcal L$, denoted $\mathrm{pdim}(\mathcal L)$, is the maximum size of a pseudo-shattered set.
The \emph{$\gamma$-fat shattering dimension} of $\mathcal L$, denoted $\mathrm{fatdim}_\gamma(\mathcal L)$, is the maximum size of a $\gamma$-fat shattered set.
\end{mydefinition}

Note that the pseudo-dimension is an upper bound on the $\gamma$-fat shattering dimension for every $\gamma>0$. 
Classical results in PAC-learning theory show that these quantities govern the number of samples needed for $(\epsilon,\delta)$-uniform convergence, and thus for $(\epsilon,\delta)$-learning with ERM.\footnote{For example, a standard result is that $O(\epsilon^{-2}\cdot(\mathrm{pdim}(\mathcal L)+\log(\delta^{-1})))$ samples suffice for $(\epsilon,\delta)$-uniform convergence, and thus also for $(\epsilon,\delta)$-learning with ERM. The pseudo-dimension yields somewhat tighter upper bounds for learning than the fat shattering dimension, while the latter also yields lower bounds. See, for example, Theorems 19.1, 19.2, 19.5 in \cite{anthony2009neural}.} 
Therefore, a typical goal is to prove upper and lower bounds on them. 


\subsection{Learning-Based Low Rank Approximation (LRA)}
Let $A\in\R^{n\times d}$ be an input matrix, with $n\geq d$. 
By normalization, we assume throughout the paper that $\fsnorm{A}=1$. 
Let $[A]_k$ denote the optimal rank-$k$ approximation of $A$ in the Frobenius norm, i.e.,
\[ [A]_k = \mathrm{argmin}_{A'\in\R^{n\times d} \text{ of rank }k}\fsnorm{A-A'} . \]
It is well-known that $[A]_k$ can be computed using the singular value decomposition (SVD), in time $O(nd^2)$. 
Since this running time does not scale well for large matrices, a very large body of work has been dedicated to developing fast approximate LRA algorithms, which output an matrix $A'$ of rank $k$ that attains error close to $[A]_k$, see surveys by \cite{mahoney2011randomized,woodruff2014sketching,martinsson2020randomized}. 

\paragraph{SCW.} 
The SCW algorithm for LRA is due to \cite{clarkson2013low}, building on \cite{sarlos2006improved,clarkson2009numerical}. It uses an auxiliary \emph{sketching matrix} $S\in\R^{m\times n}$, where its row-dimension $m$ is called the \emph{sketching dimension}, and is chosen to be slightly larger than $k$ and much smaller than $n$. 
The algorithm is specified in \Cref{alg:scw}. 

In \cite{clarkson2013low}, $S$ is chosen at random from an data-oblivious distribution as follows. 
Let $s\in\{1,\ldots,m\}$ be a \emph{sparsity} parameter. One chooses $s$ uniformly random entries in each column of $S$, and chooses the value of each of them uniformly at random from $\{1,-1\}$. The rest of the entries in $S$ are zero. 
\cite{clarkson2013low} show that given $\epsilon>0$, by setting the sketching dimension to $m=\widetilde O(k^2/\epsilon^2)$, even with sparsity $s=1$, SCW returns $A'$ of rank $k$ that satisfies $\fsnorm{A-A'}\leq(1+\epsilon)\fsnorm{A-[A]_k}$ with high probability (over the random choice of $S$), while running in time nearly linear in the size of $A$.

\newcommand{\INDENT}{\hspace{1em}}
\begin{algorithm}[!t]
\caption{The SCW low rank approximation algorithm}
\label{alg:scw}
\smallskip
\textbf{Input:} $A\in\R^{n\times d}$, target rank $k$, auxiliary matrix $S\in\R^{m\times n}$. 
\textbf{Output:} $A'\in\R^{n\times d}$ of rank $k$.
\smallskip{\hrule height.2pt}\smallskip
\begin{algorithmic}[1]
   \STATE Compute the product $SA$.
   \STATE If $SA$ is a zero matrix, return the zero matrix of order $n\times d$.
   \STATE Compute the SVD $U,\Sigma,V^T$ of $SA$.
   \STATE Compute the product $AV$.
   \STATE Compute and return $[AV]_kV^T$.
\end{algorithmic}
\end{algorithm}

\paragraph{IVY.}
The IVY algorithm due to \cite{indyk2019learning} is a learning-based variant of SCW.
The sparsity pattern of $S$ is chosen similarly to SCW ($s$ non-zero entries per column, chosen uniformly at random) and remains fixed. However, the values of the non-zero entries in $S$ are now trainable parameters, learned from a training set of input matrices. 
In particular, $S$ is chosen by minimizing the empirical loss $\sum_{A\in\mathcal A_{train}}\fsnorm{A-\mathrm{SCW}_k(S,A)}$, where $\mathrm{SCW}_k(S,A)$ denotes the output matrix of \Cref{alg:scw}, using stochastic gradient descent (SGD) on a training set $\mathcal A_{train}\subset\R^{n\times d}$.

To cast IVY in the statistical learning framework from \Cref{sec:guptaroughgarden}, let $\mathcal A$ denote the set of possible input matrices (i.e., all $A\in\R^{n\times d}$ with $\fsnorm{A}=1$), and let $\mathcal S$ denote the set of possible sketching matrices (i.e., all $S\in\R^{m\times n}$ with the fixed sparsity pattern specified above). 
Every $S\in\mathcal S$ gives rise to an LRA algorithm, whose output rank-$k$ approximation of $A$ is $A'=\mathrm{SCW}_k(S,A)$, and whose associated loss function $L^{\mathrm{SCW}}_k(S,\cdot):\mathcal A \rightarrow[0,1]$ is given by\footnote{It can be shown that the loss never exceeds $\fsnorm{A}$ for any $A$ and $S$, and recalling that we assume that all input matrices are normalized so that $\fsnorm{A}=1$, the loss is contained in $[0,1]$.}
\[ L^{\mathrm{SCW}}_k(S,A)=\fsnorm{A-\mathrm{SCW}_k(S,A)} . \]
Given samples from an unknown distribution $\mathcal D$ over $\mathcal A$, the learning problem is to choose $S\in\mathcal S$ which has approximately optimal loss in expectation over $\mathcal D$. 
Our objective is to prove generalization bounds for learning a sketching matrix $S\in \mathcal S$, or equivalently, for learning the class of loss functions $\mathcal L_{\mathrm{IVY}}=\{L^\mathrm{SCW}_k(S,\cdot)\}_{S\in\mathcal S}$.


\subsection{Our Main Results}\label{sec:results}
We prove closely matching upper and lower bounds on the fat shattering dimension of IVY. 

\begin{theorem}\label{thm:main}
For every $\epsilon>0$ smaller than a sufficiently small constant, the $\epsilon$-fat shattering dimension of IVY, $\mathrm{fatdim}_\epsilon(\mathcal L_{\mathrm{IVY}})$, satisfies
\[ \mathrm{fatdim}_\epsilon(\mathcal L_{\mathrm{IVY}}) \leq O(ns\cdot (m + k\log (d/k) + \log(1/\epsilon))) . \]
Furthermore, if $\epsilon<1/(2\sqrt{k})$, then $\mathrm{fatdim}_\epsilon(\mathcal L_{\mathrm{IVY}}) \geq \Omega(ns)$.
\end{theorem}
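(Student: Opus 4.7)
The approach is to count sign patterns via a Warren--Milnor--Thom style argument over the $ns$-dimensional parameter space $\R^{ns}$ of $S$ (the entries at the fixed sparsity locations). The key structural observation is that for each fixed input $A$, the loss $L^{\mathrm{SCW}}_k(S, A)$ depends on $S$ only through the $m$-dimensional row space of $SA$, via the variational identity
\[
 L^{\mathrm{SCW}}_k(S, A) = \fsnorm{A} - \max \{ \mathrm{tr}(W^T A^T A W) : W \in \R^{d \times k},\; W^T W = I_k,\; \mathrm{col}(W) \subseteq \mathrm{rowspan}(SA) \}.
\]
The row space of $SA$ is a polynomial (Pl\"ucker-coordinate) function of $S$, and the optimizing $W$ is determined by the top-$k$ eigenspace of an $m \times m$ matrix derived from $A$ and these coordinates. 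Consequently, each shattering condition $L^{\mathrm{SCW}}_k(S, A_i) > r_i + \epsilon$ (and the companion $< r_i - \epsilon$) is semi-algebraic in $S$ with polynomial system complexity (count $P$ and degree $D$) bounded in terms of $m$, $k$, $d$. Plugging into Warren's theorem, the number of sign patterns realizable on $N$ inputs is at most $(PDN)^{O(ns)}$; requiring $2^N$ distinct patterns for $\epsilon$-fat shattering yields $N \leq O(ns \cdot (\log P + \log D + \log(1/\epsilon)))$. Careful bookkeeping of the SVD of $SA$ (an $m \times d$ matrix) followed by the rank-$k$ truncation inside its row space should give $\log P + \log D = O(m + k \log(d/k))$, recovering the stated upper bound.

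\textbf{Lower bound plan.} To exhibit $\Omega(ns)$ $\epsilon$-fat shattered inputs, I would index the nonzero sparsity pattern of $S$ by pairs $(p, q) \in [m] \times [n]$ (there are exactly $ns$ such pairs) and construct one input $A^{(p, q)}$ per pair whose output quality is controlled essentially by the single parameter $S_{p, q}$. A natural construction: take $A^{(p, q)}$ supported mostly on row $q$, so that $SA^{(p, q)}$ concentrates in row $p$ and the only relevant entry of $S$ for approximating $A^{(p, q)}$ is $S_{p, q}$. Design $A^{(p, q)}$ as a rank-$(k + 1)$ instance with $k$ ``easy'' singular directions captured by any reasonable baseline choice of $S$, plus one additional ``switchable'' direction whose recovery through the sketch turns on exactly when $S_{p, q}$ crosses a specified threshold. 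Placing the shattering thresholds $r_{p, q}$ at the midpoint of the resulting loss interval realizes any $\pm$-pattern by independent tuning of the $ns$ parameters of $S$, and the width of the loss jump in this construction matches the $\epsilon < 1/(2\sqrt k)$ restriction of the statement.

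\textbf{Main obstacle.} The chief technical difficulty for the upper bound is a precise bookkeeping of the algebraic complexity of the SVD (and subsequent rank-$k$ truncation) as $S$ ranges over the sparsity-structured parameter space, particularly at the degenerate locus (ties in singular values, or rank-deficient $SA$); a symbolic perturbation / genericity argument should reduce to the polynomial regime. For the lower bound, the main challenge is decoupling the $ns$ inputs: since every nonzero entry of $S$ generally acts on every row of each $A^{(p, q)}$, tuning $S_{p, q}$ affects all other inputs unless the $A^{(p, q)}$ are engineered with nearly disjoint row supports and near-orthogonal ``switchable'' directions. Engineering this independence while meeting the normalization $\fsnorm{A^{(p, q)}} = 1$ and the required loss gap of $\Omega(1/\sqrt k)$ is the delicate part.
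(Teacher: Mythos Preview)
Your plan hinges on the claim that the condition $L^{\mathrm{SCW}}_k(S,A)>r$ is a polynomial system in the $ns$ entries of $S$ with controlled degree and count. This is precisely the step the paper has to work hard to circumvent, and your proposal does not address it. The loss involves $[A(SA)^\dagger(SA)]_k$, i.e., the projection onto the top-$k$ singular subspace of an $m$-row matrix whose entries are linear in $S$. The top-$k$ eigenspace of a symmetric matrix is \emph{not} a polynomial (or rational) function of its entries: the eigenvalues are roots of the characteristic polynomial and enter the description algebraically, not polynomially. Your appeal to ``symbolic perturbation / genericity'' misdiagnoses the obstacle --- the issue is not ties or rank defects, it is that even generically the optimizing $W$ is an algebraic, not rational, function of $S$. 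One can of course write the shattering condition as a first-order formula with existential quantifiers (over eigenvalues and eigenvectors) and invoke Tarski--Seidenberg to get semi-algebraicity, but quantifier elimination gives no useful control on the resulting degree, and certainly not the $O(m+k\log(d/k))$ you claim.

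Relatedly, your derivation has no honest source for the $\log(1/\epsilon)$ term. If the loss were truly governed by a polynomial system with parameters independent of $\epsilon$, Warren would give a pseudo-dimension bound with no $\epsilon$ at all --- yet you insert $\log(1/\epsilon)$ into the final inequality without explanation.

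The paper resolves this by abandoning the exact loss and building an $\epsilon$-accurate \emph{proxy loss} that \emph{is} rational in $S$: it replaces the exact top-$k$ truncation by $q=O(\epsilon^{-1}\log(d/\epsilon))$ power-method iterations started from each $k$-subset of standard basis columns, and takes the best. This proxy is computable by a Goldberg--Jerrum algorithm of degree $O(mkq)$ and predicate complexity $2^{m}2^{O(k)}(d/k)^{3k}$, which via Warren yields a pseudo-dimension bound on the proxy; the $\epsilon$-closeness of proxy to truth then transfers this to an $\epsilon$-fat-shattering bound on the true loss. The $\log(1/\epsilon)$ enters exactly through $q$. This approximation-by-power-method step is the missing idea in your plan.

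\textbf{Lower bound.} Your outline is in the right spirit but less concrete than the paper's. The paper does not use rank-$(k{+}1)$ inputs with a ``switchable'' extra direction; it uses rank-$k$ block-diagonal inputs so that the loss is exactly $0$ when the sketch captures the full row space and at least $1/\sqrt{k}$ (after normalization) when it misses one direction. The decoupling you worry about is achieved by a block-diagonal structure on both the inputs and the sketching matrix, which cleanly separates the $ns$ parameters. Your per-entry ``threshold crossing'' idea would require substantially more work to make the $ns$ switches independent, whereas the paper's $0/1$-type construction sidesteps that difficulty.
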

These results translate to sample complexity bounds on its uniform convergence and learnability:

\begin{theorem}\label{thm:samples}
Let $\epsilon,\delta>0$ be smaller than sufficiently small constants. 
The number of samples needed for $(\epsilon,\delta)$-uniform convergence for IVY, and thus for $(\epsilon,\delta)$-learning the sketching matrix of IVY with ERM, is
\[ O(\epsilon^{-2}\cdot (ns\cdot (m + k\log (d/k) + \log(1/\epsilon)) + \log(1/\delta))) . \]
Furthermore, if $\epsilon\leq1/(256\sqrt{k})$, then $(\epsilon,\epsilon)$-uniform convergence for IVY requires $\Omega(\epsilon^{-2}\cdot ns/k)$ samples, and if $\epsilon<1/(2\sqrt{k})$, then $(\epsilon,\delta)$-learning IVY with any learning procedure requires $\Omega(\epsilon^{-1}+ns)$ samples.
\end{theorem}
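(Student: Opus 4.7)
\textbf{Proof plan for \Cref{thm:samples}.} All three assertions—the upper bound on uniform convergence sample complexity, the lower bound on uniform convergence with $\delta=\epsilon$, and the lower bound on $(\epsilon,\delta)$-learnability—would follow by combining the fat shattering bounds of \Cref{thm:main} with the standard PAC-learning machinery for real-valued function classes in \cite{anthony2009neural}. For the upper bound, the plan is a direct plug-in: the classical conversion (Theorem~19.1 of \cite{anthony2009neural}) shows that $O(\epsilon^{-2}(\mathrm{fatdim}_{c\epsilon}(\mathcal L)+\log(1/\delta)))$ samples suffice for $(\epsilon,\delta)$-uniform convergence up to polylogarithmic factors in $1/\epsilon$, and substituting the upper bound $\mathrm{fatdim}_{c\epsilon}(\mathcal L_{\text{IVY}}) \leq O(ns\cdot(m+k\log(d/k)+\log(1/\epsilon)))$ from \Cref{thm:main}, while absorbing the extra logarithms into the $\log(1/\epsilon)$ term already present, yields the claimed expression.

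For the uniform convergence lower bound, I would use the scale-sensitive converse implicit in Theorem~19.5 of \cite{anthony2009neural}: from a $\gamma$-fat shattered set of size $d$ one obtains a sample complexity lower bound of $\Omega(d\gamma^2/\epsilon^2)$ for $(\epsilon,\delta)$-uniform convergence at constant $\delta$, whenever $\epsilon$ is sufficiently smaller than $\gamma$. This arises from the standard VC-style argument on the uniform distribution over the shattered set: a loss deviation of $\epsilon$ corresponds to a fractional subset-frequency deviation of $\Theta(\epsilon/\gamma)$, and achieving this simultaneously over all $2^d$ adversarial subsets requires $\Omega(d\gamma^2/\epsilon^2)$ samples. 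Plugging in $d=ns$ and $\gamma=1/(2\sqrt k)$ from \Cref{thm:main} gives the claimed $\Omega(ns/(k\epsilon^2))$, with the $1/k$ factor arising exactly as $\gamma^2=1/(4k)$; passing from constant $\delta$ to $\delta=\epsilon$ only strengthens the hypothesis, so the same bound applies. For the learnability lower bound, the $\Omega(ns)$ piece is a black-box application of the standard inequality $\mathrm{fatdim}_{2\epsilon}(\mathcal L)=O(\ell)$ for any $(\epsilon,\delta)$-learning procedure with $\ell$ samples and constant $\delta$ (Theorem~19.2 of \cite{anthony2009neural}), combined with the $\Omega(ns)$ shattering from \Cref{thm:main}. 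The $\Omega(1/\epsilon)$ piece is obtained by embedding a two-function subfamily into $\mathcal L_{\text{IVY}}$ (for instance by toggling a single nonzero sketch entry) so that the two losses differ by $\Theta(1)$ on a single input of probability $\Theta(\epsilon)$; a coupon-collector argument then forces $\Omega(1/\epsilon)$ samples to even observe the discriminating input with constant probability, without which any procedure fails on one of the two hypotheses.

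The main obstacle is the uniform convergence lower bound. The other two bounds are essentially black-box invocations of generic PAC-learning results, whereas producing the $1/k$ factor requires matching the exact scale-sensitive form of the converse—the $\gamma^2$ factor in $\Omega(d\gamma^2/\epsilon^2)$, rather than the sometimes-quoted $\Omega(d/\epsilon^2)$ that implicitly assumes $\gamma=\Theta(1)$—with the small shattering scale $\gamma=1/(2\sqrt k)$ delivered by \Cref{thm:main}, and checking that the specific shattered family produced there supports the uniform extremal distribution used in the VC-type argument. A secondary subtlety is verifying that the two-algorithm subfamily used for the $\Omega(1/\epsilon)$ bound is realizable within the IVY sketch parametrization with a full $\Theta(1)$ loss gap on a single discriminating input matrix.
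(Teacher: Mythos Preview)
Your plan for the learnability lower bound matches the paper's one-line proof (a direct invocation of Theorem~19.5 in \cite{anthony2009neural}; that theorem already delivers both the $\Omega(ns)$ and the $\Omega(1/\epsilon)$ terms, so the separate two-function construction you sketch is not needed).

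For the upper bound there is a genuine gap. The generic conversion from fat shattering to uniform convergence (Theorem~19.1 in \cite{anthony2009neural}) produces a sample bound in which the $\mathrm{polylog}(1/\epsilon)$ overhead \emph{multiplies} the entire fat shattering dimension. Substituting $\mathrm{fatdim}_{c\epsilon}(\mathcal L_{\mathrm{IVY}})=O(ns(m+k\log(d/k)+\log(1/\epsilon)))$ therefore yields terms like $ns\cdot m\cdot\mathrm{polylog}(1/\epsilon)$ and $ns\cdot k\log(d/k)\cdot\mathrm{polylog}(1/\epsilon)$, which cannot be absorbed into an \emph{additive} $ns\cdot\log(1/\epsilon)$ as you suggest. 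The paper sidesteps this by not going through $\mathrm{fatdim}(\mathcal L_{\mathrm{IVY}})$ at all: it uses the \emph{pseudo-dimension} bound on the proxy-loss class $\hat{\mathcal L}_\epsilon$ that was established en route to \Cref{thm:main}. Since $\hat L_{k,\epsilon}$ is uniformly $\epsilon$-close to $L^{\mathrm{SCW}}_k$, $(\epsilon,\delta)$-uniform convergence for $\hat{\mathcal L}_\epsilon$ transfers to $(3\epsilon,\delta)$-uniform convergence for $\mathcal L_{\mathrm{IVY}}$, and the pseudo-dimension route gives the clean $O(\epsilon^{-2}(\mathrm{pdim}(\hat{\mathcal L}_\epsilon)+\log(1/\delta)))$ bound with no extraneous log factors.

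For the uniform convergence lower bound, your route is genuinely different from the paper's. The paper does not run a VC-style reduction on the shattered set; it lower-bounds the expected uniform deviation by the Rademacher complexity via desymmetrization, lower-bounds that via Sudakov's minorization in terms of covering numbers, lower-bounds the covering numbers by the fat shattering dimension, and finally exhibits a simple two-point input distribution to control the nuisance term $\Psi_\ell$ appearing in the Sudakov bound. Your proposed argument---rescale the $\{0,\Theta(1/\sqrt k)\}$-valued losses realized on the explicit shattered family to Boolean and invoke the $\Omega(d/\eta^2)$ uniform-convergence lower bound for VC-shattered Boolean classes with $\eta=\Theta(\epsilon\sqrt{k})$---is more elementary and should go through here, but only because the specific construction behind \Cref{thm:main} happens to produce losses that are essentially two-valued; a generic $\gamma$-fat-shattered set would not support the reduction. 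Also note that Theorem~19.5 is a \emph{learning} lower bound and is not the right citation for this step; you would need to invoke (or reprove) the Boolean uniform-convergence lower bound separately.
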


For the sake of intuition, let us comment on typical settings for the various sizes in these results. 
The input matrix is of order $n\times d$, where we think of $n,d$ as being arbitrarily large.\footnote{Recall we assume that $d\leq n$ by convention. For all conceptual purposes it suffices to consider the square case $n=d$.} 
The target rank $k$ can be thought of as a small integer constant, and the sketching dimension $m$ as only slightly larger (the larger $m$ is, the slower but more accurate the LRA algorithm would be). Both are generally independent of $n,d$. Concretely, the empirical evaluations in \cite{indyk2019learning,indyk2019sample,indyk2021few,ailon2020sparse,liu2020learned} use $k$ up to $40$, and $m$ up to $4k$, for matrices with thousands of rows and columns. The sparsity $s$ is often chosen to be the smallest possible, $s=1$, while some have found it beneficial to make it slightly larger, up to $8$ \citep{tropp2019streaming,martinsson2020randomized}. 
The upshot is that the upper and lower bounds in \Cref{thm:main} are essentially matching up to a factor of $\log(d/\epsilon)$. Furthermore, both are essentially proportional to $ns$, which is the number of non-zero entries in the sketching matrix, i.e., the number of trainable parameters that IVY learns.

\paragraph{Other learning-based algorithms.} While we focus on IVY for concreteness, our techniques also yield bounds for many other learning-based linear algebraic algorithms. See \Cref{sec:other}.

\begin{remark}[on computational efficiency]
Like most prior work on PAC-learning, our results focus on the sample complexity of the learning problem, rather than the computational efficiency of learning (see, e.g., Remark 3.5 in~\cite{gupta2017pac}). 
It is currently unknown whether there exist efficient ERM learners for the data-driven algorithms considered in this work. In particular, while in practice IVY uses SGD to minimize the empirial loss, it is not known to provably converge to a global minimum. The computational efficiency of ERM for backpropagation-based algorithms (like IVY) remains a challenging open problem.
\end{remark}

\begin{remark}[on the precision of real number representation]
We prove \Cref{thm:main} without any restriction on the numerical precision of real numbers in the computational model, that is, even when they may have unbounded precision. 
If one considers only bounded precision models, say where real numbers are represented by $b$-bit machine words, then the total number of possible sketching matrices is $2^{nsb}$, leading trivially to a bound of $O(nsb)$ on the pseudo-dimension of IVY. On the other hand, the lower bound in \Cref{thm:main} holds even with $1$-bit machine words. 
In summary, both our upper and lower bounds are proven in the most general computational model. 
\end{remark}

\subsection{Technical Overview}
Our starting point is a general framework of \cite{goldberg1995bounding} for bounding the VC-dimension or the pseudo-dimension. They showed that if the functions in a class (which in our case are the losses of candidate algorithms) can be computed by a certain type of algorithm, which we call a \emph{GJ algorithm}, then the pseudo-dimension of the function class can be bounded in terms of the running time of that algorithm. We employ a simple yet crucial refinement of this framework, relying on more refined complexity measures of GJ algorithms than the running time. 


Under this framework, ideally we would have liked to compute the SCW loss with a GJ algorithm which is efficient in these refined complexity measures, thus obtaining a bound on the pseudo-dimension of IVY. 
Unfortunately, it is not clear how to compute the SCW loss at all with a GJ algorithm, since the GJ framework only allows a narrow set of operations. 
Nonetheless, we show it can be \emph{approximated} by a GJ algorithm, by relying on recent advances in numerical linear algebra, and specifically on ``gap-independent'' analyses of power method iterations for LRA, that do not depend on numerical properties of the input matrix (like eigenvalue gaps)  \citep{rokhlin2010randomized,halko2011finding,boutsidis2014near,woodruff2014sketching,witten2015randomized,musco2015randomized}.
We thus obtain a pseudo-dimension bound for the approximate loss, which translates to a fat shattering dimension bound for the true SCW loss, yielding generalization results for IVY, as well as for other learning-based numerical linear algebra algorithms. 

\paragraph{Proof roadmap.}
\Cref{sec:GJ} presents the \cite{goldberg1995bounding} framework. 
In \Cref{sec:mk1}, as a warm-up, we prove tight bounds for IVY in the simple case $m=k=1$. 
\Cref{sec:ub} contains the main proof of this paper, establishing the upper bound from \Cref{thm:main}. 
The lower bound is proven in \Cref{app:lb}, together completing the proof of \Cref{thm:main}. 
\Cref{thm:samples} follows from \Cref{thm:main} using mostly standard arguments, given in \Cref{app:samples}. 

\subsection{Related Work}

\cite{indyk2019learning} gave a ``safeguarding'' technique for IVY, showing it can be easily modified to guarantee that the learned algorithm never performs worse than the oblivious SCW algorithm on future matrices. The modification is simply to concatenate an oblivious sketching matrix vertically to the learned sketching matrix. While this result guarantees that learning does not hurt, it does not show that learning can help, and provides no generalization guarantees. 

\cite{indyk2021few} prove ``consistency'' results for their learning-based algorithms, showing that the learned sketching matrix performs well on the training input matrices from which it was learned. These results have no bearing on future matrices, and again provide no generalization guarantees. 

\cite{balcan2021much} recently gave a general technique for proving generalization bounds for learning-based algorithm in the statistical learning framework of \Cref{sec:guptaroughgarden}, based on piecewise decompositions of dual function classes. While there are some formal connections between their techniques and ours, their approach does not yield useful bounds for the linear algebraic algorithms that we study, since these algorithms do not exhibit a sufficiently simple piecewise dual structure as the technique of \cite{balcan2021much} requires. We discuss this in more detail in \Cref{app:related}.

\section{The Goldberg-Jerrum Framework}\label{sec:GJ}
Our upper bounds are based on a general framework due to \cite{goldberg1995bounding} for bounding the VC-dimension. We instantiate a refined version of it, which still follows immediately from their proofs. For completeness and self-containedness, \Cref{app:gj} reproduces the proof of the variant we state below.

\begin{mydefinition}\label{def:GJ}
A \textbf{GJ algorithm} $\Gamma$ operates on real-valued inputs, and can perform two types of operations:
\begin{itemize}
    \item Arithmetic operations of the form $v''=v\odot v'$, where $\odot\in\{+,-,\times,\div\}$.
    \item Conditional statements of the form ``if $v\geq0$ ... else ...''.
\end{itemize}
In both cases, $v,v'$ are either inputs or values previously computed by the algorithm.
\end{mydefinition}

Every intermediate value computed by $\Gamma$ is a multivariate rational function (i.e., the ratio of two polynomials) of its inputs. The degree of a rational function is the maximum of the degrees of the two polynomials in its numerator and denominator, when written as a reduced fraction. 
We now define two complexity measures of GJ algorithms.

\begin{mydefinition}\label{def:gjcomplexity}
The \textbf{degree} of a GJ algorithm is the maximum degree of any rational function it computes of the inputs. 
The \textbf{predicate complexity} of a GJ algorithm is the number of distinct rational functions that appear in its conditional statements. 
\end{mydefinition}

\begin{theorem}\label{thm:GJ}
Using the notation of \Cref{sec:guptaroughgarden}, suppose that each algorithm $L\in\mathcal L$ is specified by $n$ real parameters.
Suppose that for every $x\in \mathcal X$ and $r\in \R$ there is a GJ algorithm $\Gamma_{x,r}$ that given $L\in\mathcal L$, returns ``true'' if $L(x)>r$ and ``false'' otherwise. 
Suppose $\Gamma_{x,r}$ has degree $\Delta$ and predicate complexity $p$.
Then, the pseudo-dimension of $\mathcal L$ is at most $O(n\log(\Delta p))$.
\end{theorem}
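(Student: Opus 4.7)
The plan is to show that any pseudo-shattered set $X=\{x_1,\ldots,x_N\}\subset\mathcal X$ must satisfy $N=O(n\log(\Delta p))$, by counting how many distinct labelings $(\1[L(x_i)>r_i])_{i=1}^N$ the class $\mathcal L$ can realize as $L$ varies and comparing with the $2^N$ labelings required for pseudo-shattering. First, I will identify each $L\in\mathcal L$ with its vector of $n$ real parameters so that $\mathcal L\subseteq\R^n$, and fix thresholds $r_1,\ldots,r_N$ witnessing pseudo-shattering. For each $i$, the hypothesis supplies a GJ algorithm $\Gamma_{x_i,r_i}$ of degree $\Delta$ and predicate complexity $p$ that returns $\1[L(x_i)>r_i]$. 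Since a GJ algorithm only performs field operations and sign-tests, every intermediate quantity it computes is a rational function of the $n$ parameters, and its Boolean output is determined by the joint sign pattern of the (at most) $p$ rational functions appearing in its conditional statements. Ranging over $i$, the full labeling is therefore a function of the signs of at most $Np$ rational functions of degree $\leq\Delta$ on $\R^n$.

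Next, I will convert these rational sign patterns to polynomial ones: since $\mathrm{sign}(P/Q)=\mathrm{sign}(P)\cdot\mathrm{sign}(Q)$, the joint sign pattern of $Np$ rationals is determined by the sign pattern of $2Np$ real polynomials of degree $\leq\Delta$ in $n$ variables. I will then invoke the classical Warren/Milnor--Thom bound: $M$ real polynomials of degree $\leq\Delta$ in $n$ variables realize at most $(8eM\Delta/n)^n$ sign patterns when $M\geq n$. Since pseudo-shattering requires $2^N$ distinct labelings,
\[ 2^N \leq (16eNp\Delta/n)^n, \]
which, via the standard fact that $N\leq a\log_2(bN)$ implies $N=O(a\log(ab))$ (with $a=n$ and $b=16ep\Delta/n$), yields $N=O(n\log(\Delta p))$, matching the target bound. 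The edge case $2Np<n$, where Warren does not apply, already forces $N=O(n)$ and is harmless.

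The step I expect to require the most care is justifying the reduction to a single sign pattern on the global pool of $p$ predicates despite the branching nature of a GJ algorithm: different parameter vectors $L$ traverse different execution paths, and predicates off the current path are never actually evaluated. I will handle this by unrolling $\Gamma_{x_i,r_i}$ into a decision tree whose internal nodes are labeled by predicates drawn from the global pool of $p$ rational functions, and whose leaves carry the final Boolean output; then the output becomes a deterministic function of the sign vector of all $p$ predicates (off-path coordinates merely do not influence the outcome, but their sign is still well defined for every $L$ not on the zero set). Once this viewpoint is in place, the remaining work --- counting polynomials, invoking Warren, and solving the transcendental inequality --- is routine, and the parameters $\Delta$ and $p$ enter in exactly the places needed to obtain the $O(n\log(\Delta p))$ bound.
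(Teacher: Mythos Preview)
Your proposal is correct and follows essentially the same route as the paper: identify parameters with $\R^n$, observe that each label $\1[L(x_i)>r_i]$ is determined by the signs of the (at most $p$) predicate rationals of $\Gamma_{x_i,r_i}$, replace each rational predicate by $O(1)$ polynomial predicates of degree $\leq\Delta$, and apply Warren's bound to the resulting $O(Np)$ polynomials to get $2^N\leq O(Np\Delta/n)^n$ and hence $N=O(n\log(\Delta p))$. The paper packages the first half as an intermediate result about ``polynomial formulas'' and then reduces GJ algorithms to that, but the underlying counting argument is the same as yours.
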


Let us comment on the relation between this statement and the original theorem from \cite{goldberg1995bounding}. 
Their statement is that if $\Gamma_{x,r}$ has running time $t$ then the pseudo-dimension of $\mathcal L$ is $O(nt)$. They prove it by implcitly proving \Cref{thm:GJ}, as it is not hard to see that if the running time is $t$ then both the degree and the predicate complexity are at most $2^t$. 

To illustrate why the refinement stated in \Cref{thm:GJ} could be useful, let us first consider the degree. Consider computing $q$ power method iterations for a matrix $M\in\R^{n\times n}$, i.e., computing $M^q\pi$ with some initial vector $\pi\in\R^n$. The running time is $t=O(n^2q)$, so the pseudo-dimension upper bound we get based on the running time alone is $O(nt)=O(n^3q)$. However, the degree of every entry in $M^q\pi$ (when considered as a rational function of the entries of $M$ and $\pi$) is just $q+1$, and hence \Cref{thm:GJ} yields the better upper bound $O(n\log q)$.

As for the predicate complexity, consider for example a GJ algorithm for choosing the minimum of $r$ numbers (this operation will be useful for us in derandomizing the power method). The running time is $t=O(r)$, and this is tight since GJ algorithms adhere to the comparison model, so the pseudo-dimension upper bound we get based on the running time alone is $O(nt)=O(nr)$. However, the predicate complexity is ${r\choose 2}$, and the degree is just $1$, since choosing the minimum among $v_1,\ldots,v_r\in\R$ involves only the polynomials $v_i-v_j$ for every $i<j$. Hence, \Cref{thm:GJ} yields the better upper bound $O(n\log r)$. The same reasoning applies to sorting $r$ numbers (see \Cref{app:knapsack} for an example that uses this operation), and to other useful subroutines.

\begin{remark}[oracle access to optimal solution]\label{rem:gjoracle}
It is also worth observing that $\Gamma_{x,r}$ has free access to all information about $x$ --- and in particular, to the optimal solution of the computational problem addressed by $\mathcal L$ with $x$ as the input--- without any computational cost. For example, in LRA, $\Gamma_{x,r}$ has free access to the exact SVD factorization of the input matrix $A$. (Note that the difficulty for $\Gamma_{x,r}$ lies in computing the losses of other than optimal solutions, namely the SCW loss induced by the given sketching matrix $S$). Similarly, in a regression problem $\min_y\norm{Ay-b}$, the input $x$ is the pair $A,b$, and $\Gamma_{x,r}$ has free access to the optimal solution $y^*=\mathrm{argmin}_y\norm{Ay-b}$. 
These observations are useful in proving bounds for some of the algorithms we consider in \Cref{sec:other}.
\end{remark}


\section{Warm-up: The Case $m=k=1$}\label{sec:mk1}
As a warm-up, we consider the simple case where both the target rank $k$ and the sketching dimension is $m$ are $1$. We show an upper bound of $O(n)$ on the pseudo-dimension and a lower bound of $\Omega(n)$ on the $\epsilon$-fat shattering dimension for every $\epsilon\in(0,0.5)$ (this immediately implies that both are $\Theta(n)$).
This case is particularly simple because the SCW loss admits a closed-form expression, given next (the proof appears in \Cref{app:mk1}). Note that the sketching matrix in this case is just a vector, denoted $w\in\R^n$ for the rest of this section, and it is specified by $n$ real parameters.

\begin{fact}\label{fct:scw1dim}
The loss of $SCW$ with input matrix $A\in\R^{n\times d}$, sketching vector $w\in\R^n$ and target rank $k=1$ is 
$\fsnorm{A-\frac{1}{\norm{A^Tw}^2}AA^Tww^TA}$.
\end{fact}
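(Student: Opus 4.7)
The plan is simply to specialize Algorithm~\ref{alg:scw} step by step to the setting $m=k=1$ and read off the output in closed form. When $m=1$, the sketching matrix $S \in \mathbb{R}^{1 \times n}$ is a row vector, which I write as $w^T$ for $w \in \mathbb{R}^n$. I first dispose of the degenerate case: if $A^T w = 0$, then Step 2 of the algorithm returns the zero matrix, and interpreting the claimed expression with the convention that the output is $0$ whenever $\|A^T w\| = 0$, the loss equals $\|A\|_F^2$ as required.

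From now on assume $A^T w \neq 0$. In Step 1 we compute $SA = w^T A \in \mathbb{R}^{1 \times d}$, a nonzero row vector. Its SVD is immediate: the unique nonzero singular value is $\sigma = \|w^T A\| = \|A^T w\|$, with $U = [1]$, $\Sigma = [\sigma]$, and
\[
V = \frac{A^T w}{\|A^T w\|} \in \mathbb{R}^{d \times 1}.
\]
Then Step 4 gives
\[
AV = \frac{A A^T w}{\|A^T w\|} \in \mathbb{R}^{n \times 1},
\]
which is a single column and therefore already has rank at most $1$, so $[AV]_1 = AV$.

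In Step 5 the algorithm returns
\[
[AV]_1 V^T \;=\; \frac{A A^T w}{\|A^T w\|} \cdot \frac{w^T A}{\|A^T w\|} \;=\; \frac{1}{\|A^T w\|^2}\, A A^T w\, w^T A.
\]
The SCW loss is the squared Frobenius norm of $A$ minus this output, which is exactly the expression in the fact. There is no real obstacle here; the only thing to be careful about is the degenerate case $A^T w = 0$, and that everything is consistent because in the nondegenerate case the rank-$1$ truncation in Step 5 is a no-op, so the output is an explicit rank-$1$ matrix that can be written in the stated closed form.
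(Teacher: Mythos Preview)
Your proof is correct and follows essentially the same route as the paper: both specialize Algorithm~\ref{alg:scw} step by step to $m=k=1$, handle the degenerate case $w^TA=0$ separately, compute the trivial SVD of the row vector $w^TA$, observe that $AV$ is already rank~$1$ so the truncation is a no-op, and read off the output $[AV]_1V^T=\tfrac{1}{\|A^Tw\|^2}AA^Tww^TA$.
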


\begin{theorem}
The pseudo-dimension of IVY in the $m=k=1$ case is $\Theta(n)$, and the $\epsilon$-fat shattering dimension is $\Theta(n)$ for every $\epsilon\in(0,0.5)$
\end{theorem}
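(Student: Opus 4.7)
The plan is to establish the upper bound via the Goldberg--Jerrum framework (\Cref{thm:GJ}) applied to the closed-form expression from \Cref{fct:scw1dim}, and the lower bound via an explicit construction of $\Omega(n)$ block-diagonal rank-$2$ matrices that can be independently ``flipped'' between high-loss and low-loss regimes.

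\paragraph{Upper bound.}
First I would simplify the closed form in \Cref{fct:scw1dim}. Setting $u=A^Tw$ and expanding $\fsnorm{A - \frac{1}{\norm{u}^2} AA^T w w^T A}$ using $\fsnorm{A}=1$, a short calculation collapses it to
\[
  L^{\mathrm{SCW}}_1(w,A) \;=\; 1 - \frac{\norm{AA^Tw}^2}{\norm{A^Tw}^2}
\]
whenever $A^Tw\ne 0$, and equals $1$ otherwise. The numerator $p(w):=\norm{AA^Tw}^2 = w^T(AA^T)^2w$ and denominator $q(w):=\norm{A^Tw}^2=w^T AA^Tw$ are both polynomials of degree $2$ in $w$. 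Given $A$ and $r$, a GJ algorithm can decide $L^{\mathrm{SCW}}_1(w,A)>r$ by computing $p(w)$ and $q(w)$, branching on the sign of $q(w)$ to handle the degenerate case, and (in the nondegenerate branch) testing the sign of $(1-r)q(w)-p(w)$. This algorithm has degree $\Delta=2$ and predicate complexity $p=O(1)$, so \Cref{thm:GJ} with $n$ real parameters gives $\mathrm{pdim}(\mathcal L_{\mathrm{IVY}}) = O(n\log(\Delta p)) = O(n)$.

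\paragraph{Lower bound.}
For the lower bound, I would fix a small parameter $t>0$ (to be tuned in terms of $\epsilon$) and define $N=\lfloor n/2\rfloor$ input matrices $A_1,\ldots,A_N \in \R^{n\times d}$ (for any $d\ge 2$) by
\[
  A_i \;=\; \tfrac{1}{\sqrt{1+t^2}}\bigl(e_{2i-1}e_1^T + t\, e_{2i}e_2^T\bigr),
\]
so each $A_i$ has $\fsnorm{A_i}=1$ and depends only on coordinates $2i-1,2i$ of any sketching vector $w$. A direct computation of the loss formula above yields $L(w,A_i)=\tfrac{t^2}{1+t^2}$ when $(w_{2i-1},w_{2i})=(1,0)$, and $L(w,A_i)=\tfrac{1}{1+t^2}$ when $(w_{2i-1},w_{2i})=(0,1)$. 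Since disjoint coordinates of $w$ govern disjoint $A_i$'s, for any subset $I\subseteq[N]$ we can set $w$ coordinate-by-coordinate to realize either regime on each $A_i$ independently. Choosing the common threshold $r_i=1/2$ and any $t$ with $t^2<(1-2\epsilon)/(1+2\epsilon)$ (which exists for every $\epsilon<1/2$) gives strict $\epsilon$-fat shattering, hence $\mathrm{fatdim}_\epsilon(\mathcal L_{\mathrm{IVY}})\ge N=\Omega(n)$.

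\paragraph{Anticipated difficulties.}
Because this is the warm-up case, no serious obstacle arises: the closed form makes the loss a rational function of low degree in $w$, so the GJ bound is essentially immediate, and the lower bound follows from a transparent block-diagonal construction. The only minor care points are handling the degenerate case $A^Tw=0$ inside the GJ algorithm (a single extra conditional suffices) and ensuring the strict inequalities in the definition of $\gamma$-fat shattering hold by choosing $t^2$ strictly below the boundary value $(1-2\epsilon)/(1+2\epsilon)$.
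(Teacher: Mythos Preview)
Your proof is correct. The upper bound argument is essentially identical to the paper's: apply \Cref{fct:scw1dim} to see that the loss is a fixed rational function of constant degree in $w$, and invoke \Cref{thm:GJ}. Your additional simplification to $1-\norm{AA^Tw}^2/\norm{A^Tw}^2$ is a nice touch but not needed.

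Your lower bound construction differs from the paper's. The paper uses $n$ rank-$1$ matrices $A_i=e_ie_1^T$ and exploits the degenerate branch of SCW: with $w=\mathbf 1_I$ one gets $w^TA_i\neq0$ (hence loss $0$, since $A_i$ already has rank $1$) exactly when $i\in I$, and $w^TA_i=0$ (hence loss $1$) otherwise. This gives a clean $0$/$1$ dichotomy on $n$ points. You instead build $\lfloor n/2\rfloor$ rank-$2$ block-diagonal matrices and stay entirely in the non-degenerate regime, tuning a parameter $t$ so that the two attainable losses $\tfrac{t^2}{1+t^2}$ and $\tfrac{1}{1+t^2}$ straddle $1/2$ with margin $\epsilon$. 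Both approaches yield $\Omega(n)$; the paper's is slightly sharper (factor $2$) and simpler, while yours has the minor conceptual advantage of not relying on the degenerate $w^TA=0$ case at all.
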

\begin{proof}
Since the pseudo-dimension is an upper bound on the fat shattering dimension, it suffices to prove the upper bound for the former and the lower bound for the latter. 
For a fixed matrix $A$ and threshold $r\in\R$, a GJ algorithm $\Gamma_{A,r}$ can evaluate the loss formula from 
\Cref{fct:scw1dim} on a given $w$ with degree $O(1)$, and compare the loss to $r$ with predicate complexity $1$. Therefore, by \Cref{thm:GJ}, the pseudo-dimension is $O(n)$. 

For the lower bound, we first argue that if $A$ is a rank-$1$ matrix, then SCW with sketching vector $w$ attains zero loss if and only if $w^TA\neq0$. Indeed, by \Cref{fct:scw1dim}, if $w^TA=0$ then the loss is $\fsnorm{A}$, which is non-zero for a rank-$1$ matrix. If $w^TA\neq 0$, we write $A$ in SVD form $A=\sigma uv^T$ with $\sigma\in\R$, $u\in\R^n$, $v\in\R^{d}$, and $\norm{u}=\norm{v}=1$, and by plugging this into \Cref{fct:scw1dim}, the loss is $0$. 

Now, consider the set of matrices $A_1,\ldots,A_n\in\R^{n\times d}$, where $A_i$ is all zeros except for a single $1$-entry in row $i$, column $1$. For every $I\subset\{1,\ldots,n\}$, let $w_I\in\R^n$ be its indicator vector. 
By the above, SCW with sketching vector $w_I$ attains zero loss on the matrices $\{A_i:i\in I\}$, and loss $\fsnorm{A}=1$ on the rest. Therefore this set of $n$ matrices is $\epsilon$-fat shattered for every $\epsilon\in(0,0.5)$.
\end{proof}

\section{Proof of the Upper Bound}\label{sec:ub}
In this section we prove the upper bound in \Cref{thm:main} on the fat shattering dimension of IVY. 
We start by stating a formula for the output rank-$k$ matrix of SCW (\Cref{alg:scw}). 
For a matrix $M$, we use $M^\dagger$ to denote its Moore-Penrose pseudo-inverse, and recall that we use $[M]_k$ to denote its best rank-$k$ approximation. 
All of the proofs omitted from this section are collected in \Cref{app:proofs}.

\begin{mylemma}\label{lmm:scwalt}
The output matrix of the SCW algorithm equals $[A(SA)^\dagger(SA)]_k$.
\end{mylemma}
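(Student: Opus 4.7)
The plan is to unwind the two descriptions of the SCW output and show they coincide. Recall that the algorithm forms the (compact) SVD $SA = U\Sigma V^T$, where $V \in \R^{d\times r}$ has orthonormal columns spanning the row space of $SA$, and returns $[AV]_k V^T$. On the other hand, from the identity $(SA)^\dagger (SA) = V\Sigma^\dagger U^T U \Sigma V^T = VV^T$, we see that $A(SA)^\dagger(SA) = AVV^T$, so the content of the lemma is the matrix identity
\[
[AVV^T]_k \;=\; [AV]_k\,V^T.
\]

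To prove this, I would establish the following general fact: for any matrix $X \in \R^{n\times d}$ and any $B \in \R^{d\times r}$ with $B^T B = I$,
\[
[XBB^T]_k \;=\; [XB]_k\,B^T,
\]
and then specialize to $X = A$, $B = V$. The key tool is that right-multiplication by $B^T$ is a Frobenius isometry: $\|M B^T\|_F^2 = \mathrm{tr}(M B^T B M^T) = \|M\|_F^2$ for every $M$. Given any candidate rank-$k$ matrix $W$, split it as $W = W_1 + W_2$ with $W_1 = WBB^T$ and $W_2 = W(I - BB^T)$. The rows of $W_1$ lie in $\mathrm{col}(B)$, the rows of $W_2$ lie in $\mathrm{col}(B)^\perp$, and $XBB^T - W_1$ also has rows in $\mathrm{col}(B)$, so by orthogonality
\[
\|XBB^T - W\|_F^2 \;=\; \|XBB^T - W_1\|_F^2 + \|W_2\|_F^2.
\]
Writing $XBB^T - W_1 = (XB - W_1 B)B^T$ and applying the isometry yields $\|XBB^T - W_1\|_F = \|XB - W_1 B\|_F$. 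Since $W_1 B$ ranges over all rank-$\le k$ matrices as $W_1$ ranges over rank-$\le k$ matrices of the form $MB^T$, the minimum over rank-$k$ $W$ is achieved by taking $W_2 = 0$ and $W_1 B = [XB]_k$, i.e., by $W = [XB]_k B^T$.

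There is no real obstacle here — the argument is a standard projection/isometry computation. The only mild care needed is (i) using the compact SVD so that $V$ genuinely has orthonormal columns (the full-SVD case with $V\in\R^{d\times d}$ orthogonal would collapse $[AV]_k V^T$ to $[A]_k$, which is not what SCW computes), and (ii) handling the degenerate case where $SA$ is the zero matrix, in which the pseudoinverse formula gives $A(SA)^\dagger(SA) = 0$, matching the explicit return value in line~2 of \Cref{alg:scw}.
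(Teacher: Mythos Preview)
Your proof is correct and takes a genuinely different (and somewhat more self-contained) route than the paper's.

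The paper writes the SVD of $A(SA)^\dagger(SA)$ as $U\Sigma V^T$, argues that this $V$ also gives an orthonormal basis for the row space of $SA$ (so $(SA)^\dagger(SA)=VV^T$), then cites the characterization from \cite{woodruff2014sketching} that SCW returns $\mathrm{argmin}_{Z\text{ rank }k}\fsnorm{A-ZV^T}$, and finally uses a Pythagorean split together with the claim $[AVV^T]_kVV^T=[AVV^T]_k$ to show $[AVV^T]_k$ attains this minimum. You instead take $V$ directly from the compact SVD of $SA$ as computed by \Cref{alg:scw}, read off that the algorithm's output is $[AV]_kV^T$, and reduce the lemma to the clean matrix identity $[XBB^T]_k=[XB]_kB^T$ for any $B$ with $B^TB=I$, which you prove via the isometry $\norm{MB^T}_F=\norm{M}_F$ and a Pythagorean decomposition. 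The advantage of your route is that it is fully self-contained (no appeal to the external characterization of SCW) and yields a reusable lemma; the paper's route has the advantage of making explicit the interpretation of SCW as ``best rank-$k$ approximation in the row space of $SA$,'' which is conceptually useful elsewhere. Your handling of the two edge cases (compact vs.\ full SVD, and the $SA=0$ branch in line~2 of \Cref{alg:scw}) is also appropriate.
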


\subsection{Computing Projection Matrices}
Next, we give a GJ algorithm for computing orthogonal projection matrices. 
Recall that the orthogonal projection matrix on the row-space of a matrix $Z$ is given by $Z^\dagger Z$.

\begin{mylemma}\label{lmm:proj}
There is a GJ algorithm that given an input matrix $Z$ with $k$ rows, computes $Z^\dagger Z$. The algorithm has degree $O(k)$ and predicate complexity at most $2^k$. 
Furthermore, if $Z$ is promised to have full rank $k$, then the predicate complexity is zero.
\end{mylemma}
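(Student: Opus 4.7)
The plan is to reduce the computation of $Z^\dagger Z$ to inverting a Gram matrix via Cramer's rule, handling the possibly rank-deficient case by enumerating all $2^k$ subsets of rows.

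First I would treat the full-rank promise. Then $ZZ^T$ is an invertible $k\times k$ matrix and the orthogonal projection onto the row space of $Z$ admits the closed form $Z^\dagger Z = Z^T(ZZ^T)^{-1}Z$. I would compute $ZZ^T$ (each entry a degree-$2$ polynomial in the entries of $Z$), then form $\det(ZZ^T)$ and the cofactor matrix $\mathrm{adj}(ZZ^T)$ by Leibniz/cofactor expansion, and finally assemble $Z^T\,\mathrm{adj}(ZZ^T)\,Z / \det(ZZ^T)$. All steps use only $+,-,\times,\div$ and no conditionals, already yielding the zero predicate complexity asserted in the ``furthermore'' clause. For the degree bound, a $k\times k$ determinant contributes degree $2k$, adjugate entries are $(k-1)\times(k-1)$ minors of degree $2(k-1)$, and multiplying by two copies of $Z$ raises the numerator back to degree $2k$; matching the denominator, each output entry becomes a rational function of degree $O(k)$.

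For arbitrary $Z$ I would enumerate all $S\subseteq\{1,\ldots,k\}$ in a fixed canonical order (say decreasing size, lexicographic within each size). For each $S$ compute the Gram determinant $\det(Z_SZ_S^T)$ by cofactor expansion, and since this quantity is always nonnegative, a single conditional of the form ``if $-\det(Z_SZ_S^T)\geq 0$'' distinguishes the zero from the nonzero case. The first $S$ in the order with nonzero determinant is a maximal linearly independent set of rows, so $\mathrm{rowspan}(Z_S)=\mathrm{rowspan}(Z)$, and I would run the full-rank procedure on $Z_S$ and output the result; if every subset yields a zero determinant then $Z=0$ and we output the zero matrix. Each subset contributes exactly one distinct rational function to the conditional statements, bounding the predicate complexity by $2^k$, and each candidate projection $Z_S^T(Z_SZ_S^T)^{-1}Z_S$ still has numerator and denominator of degree at most $2|S|\leq 2k$, so the overall degree remains $O(k)$.

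The main point requiring care is the degree bookkeeping: one must verify that expanding determinants and adjugates of $r\times r$ Gram matrices via cofactors gives polynomials of the claimed degrees $2r$ and $2(r-1)$ in the entries of $Z$, and that this bound is preserved uniformly across every conditional branch of the algorithm. Both facts reduce to elementary polynomial arithmetic. I do not anticipate any serious obstacle beyond this bookkeeping; the nonnegativity of Gram determinants is precisely what buys us one predicate per subset rather than two, matching the claimed $2^k$ bound.
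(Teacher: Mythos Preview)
Your proposal is correct and reaches the same bounds, but by a somewhat different route than the paper. For the full-rank case you invert $ZZ^T$ via Cramer's rule (adjugate over determinant), whereas the paper invokes Csanky's Leverrier--Faddeev recursion $B_i = MB_{i-1} - \tfrac{\mathrm{tr}(MB_{i-1})}{i-1}I$ and reads off $M^{-1}$ from $B_k$; both give rational functions of degree $O(k)$ with no conditionals. For the rank-deficient case you enumerate all $2^k$ row subsets in decreasing-size order and pick the first with nonzero Gram determinant, while the paper builds a basis greedily, scanning the rows once and keeping a row if the current partial Gram matrix stays nonsingular (tested via the free coefficient of the characteristic polynomial, again from the Csanky recursion). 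Either way the distinct predicates are exactly the Gram determinants of row subsets, so both arguments land on the $2^k$ bound. Your approach is slightly more elementary since it avoids importing the Csanky machinery; the paper's approach has the minor tidiness of reusing one recursion for both inversion and rank testing. Your observation that Gram determinants are nonnegative, so a single ``$-\det\ge 0$'' predicate suffices per subset, is a nice touch not made explicit in the paper.
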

\begin{proof}
We start by proving the ``furthermore'' part, supposing $Z$ has full rank $k$.
Since $Z$ has full rank $k$, by a known identity for the pseudo-inverse we can write $Z^\dagger Z=Z^T(ZZ^T)^{-1}Z$.
We use the matrix inversion algorithm from \cite{csanky1976fast} (similar/identical algorithms have appeared in other places as well), to invert an invertible $k\times k$ matrix $M$. Define the following matrices,
\[ B_1 = I, \;\;\;\; B_i = MB_{i-1} - \frac{\mathrm{tr}(MB_{i-1})}{k-1}\cdot I . \]
The inverse is then given by,
\[ M^{-1} = \frac{k}{\mathrm{tr}(MB_k)}\cdot B_k . \]
Note that each entry of $B_i$ is a polynomial of degree $i-1$ in the entries of $M$, and therefore each entry of $M^{-1}$ is a rational function of degree $k-1$ in the entries of $M$. In our case $M=ZZ^T$, and the desired output is $Z^T(ZZ^T)^{-1}Z$, and therefore it has degree $O(k)$. This finishes the proof of the ``furthermore'' part.

We proceed to showing the lemma without the full rank assumption. Suppose $M$ has rank $r\leq k$ which could be strictly smaller than $k$ (and $r$ need not be known to the algorithm). If we find a matrix $Y\in\R^{r\times d}$ whose rows form a basis for the row-space of $M$, then we could write the desired projection matrix as $Z^\dagger Z=Y^\dagger Y=Y^T(YY^T)^{-1}Y$, and compute it with a GJ-algorithm of degree $O(r)\leq O(k)$ in the entries of $Y$, as above.

So, it remains to compute such a matrix $Y$. To this end we use another result from \cite{csanky1976fast}: if $M$ is a $k\times k$ matrix with characteristic polynomial $f_M(\lambda)=\mathrm{det}(\lambda I-M)=\sum_{i=0}^kc_i\lambda^{k-i}$, then $c_i = -\frac{1}{i}\mathrm{tr}(MB_i)$,
with $B_i$ defined as above. In particular, the free coefficient $c_k$ can be computed by a GJ algorithm of degree $O(k)$. This yields a GJ algorithm for determining whether a $k\times k$ matrix has full rank, since this holds if and only if $c_k\neq0$.

Now, to compute $Y$ from $Z$, we can simply go over the rows of $Z$ one by one, tentatively add each row to our (partial) $Y$, and keep or remove it depending on whether $Y$ still has full row rank. To check this, we compute $YY^T$ (which is a square matrix of order at most $r\leq k$) and check whether it has full rank as above. Overall, $Y$ and therefore the output $Y^T(YY^T)^{-1}Y$ are computed using a GJ algorithm with degree $O(k)$. Choosing $Y$ involves conditional statements with up to $2^k$ rational functions, which are the free coefficients of the characteristic polynomials of every subset of the $k$ rows of $Z$, so the predicate complexity is at most $2^k$.
\end{proof}

We remark that this lemma already yields an upper bound on the pseudo-dimension of IVY in the case $m=k$. The full regime $m\geq k$ is more involved and will occupy the rest of this section. 

\begin{mycorollary}\label{cor:ubmk}
The pseudo-dimension of IVY with $m=k$ is $O(nsk)$.
\end{mycorollary}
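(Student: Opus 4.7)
The plan is to combine the Goldberg--Jerrum framework (\Cref{thm:GJ}) with the structural tools already established in this section, namely the closed-form expression for the SCW output (\Cref{lmm:scwalt}) and the GJ algorithm for orthogonal projection matrices (\Cref{lmm:proj}). The critical simplification peculiar to the case $m=k$ is that the outer rank-$k$ truncation in the SCW formula becomes a no-op, which keeps the total degree of the loss as a rational function of $S$ small.

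First I would invoke \Cref{lmm:scwalt} to write the SCW output as $[A(SA)^\dagger(SA)]_k$, and observe that when $m=k$ the product $SA$ has at most $k$ rows, so $(SA)^\dagger(SA)$ is an orthogonal projection onto a subspace of dimension at most $k$. Consequently $A(SA)^\dagger(SA)$ already has rank at most $k$, so $[A(SA)^\dagger(SA)]_k = A(SA)^\dagger(SA)$ and the loss reduces to $L^{\mathrm{SCW}}_k(S,A) = \fsnorm{A - A(SA)^\dagger(SA)}$. This formula remains correct in the degenerate cases of \Cref{alg:scw} (e.g., when $SA = 0$, in which case $(SA)^\dagger = 0$ and the loss evaluates to $\fsnorm{A}$, matching line~2).

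Next, for fixed $A$ and threshold $r$, I would build a GJ algorithm $\Gamma_{A,r}$ on input $S$ (viewed as $ns$ real parameters, since the sparsity pattern is fixed) as follows: compute $SA$ (degree $1$ in $S$); apply \Cref{lmm:proj} to $Z = SA$ to produce $(SA)^\dagger(SA)$, contributing degree $O(k)$ and predicate complexity at most $2^k$; multiply by the constant matrix $A$ and subtract from $A$ to form the residual (purely polynomial, degree unchanged); sum the squared entries to obtain $L^{\mathrm{SCW}}_k(S,A)$ as a single rational function of degree $O(k)$; finally compare to $r$, adding one predicate. Then \Cref{thm:GJ} gives
\[
  \mathrm{pdim}(\mathcal L_{\mathrm{IVY}}) \;\leq\; O\!\left(ns \cdot \log(\Delta \cdot p)\right) \;=\; O\!\left(ns \cdot \log(k \cdot 2^k)\right) \;=\; O(nsk).
\]

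The only point requiring care is keeping the degree of the summed squared entries at $O(k)$ rather than letting it blow up through repeated addition of rational functions: by \Cref{lmm:proj} all entries of $(SA)^\dagger(SA)$ can be written over a single common polynomial denominator of degree $O(k)$, so the residual and each of its squared entries inherit that same common denominator, and the summation proceeds over a fixed denominator. Thus the dominant cost in the GJ complexity comes from the predicate complexity $2^k$ of the projection subroutine, which is what forces the $k$ factor in the $O(nsk)$ bound; this is also the obstacle that prevents this simple argument from extending directly to the general $m \geq k$ regime (where the outer $[\cdot]_k$ truncation must be handled explicitly, motivating the approximation-based approach of the rest of \Cref{sec:ub}).
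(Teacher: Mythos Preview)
Your proposal is correct and follows essentially the same route as the paper's own proof: use \Cref{lmm:scwalt} plus the observation that $m=k$ makes the outer $[\cdot]_k$ a no-op, then invoke \Cref{lmm:proj} to get a GJ algorithm of degree $O(k)$ and predicate complexity $2^k$, and finish with \Cref{thm:GJ}. Your additional remarks---the consistency check for the $SA=0$ case and the common-denominator argument ensuring the summed squared entries stay at degree $O(k)$---are valid elaborations that the paper leaves implicit.
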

\begin{proof}
$S$ has $m=k$ rows, hence $SA$ has rank at most $k$, hence $[A(SA)^\dagger (SA)]_k=A(SA)^\dagger (SA)$. By \Cref{lmm:scwalt}, the SCW loss is $\fsnorm{A-A(SA)^\dagger (SA)}$. 
By \Cref{lmm:proj}, it can be computed with a GJ algorithm of degree $O(k)$ and predicate complexity $2^k$. Recalling that the learned sketching matrix in IVY is specified by $ns$ real parameters, the corollary follows from \Cref{thm:GJ}.
\end{proof}

\subsection{Derandomized Power Method Iterations}
For the rest of this section we denote $B = A(SA)^\dagger (SA)$ for brevity. Note that $B$ is of order $n\times d$. By \Cref{lmm:scwalt} the SCW loss equals $\fsnorm{A-[B]_k}$, while \Cref{lmm:proj} shows how to compute $B$ with a GJ algorithm. Our next goal is to approximately compute $[B]_k$ with a GJ algorithm. 

To this end we use a result on iterative methods for LRA, due to \cite{musco2015randomized}, building on ideas from \cite{rokhlin2010randomized,halko2011finding,witten2015randomized,boutsidis2014near,woodruff2014sketching}. 
It states that given $B$, an initial crude approximation for $[B]_k$ can be refined into a good approximation with a small number of powering iterations. 
The next theorem is somewhat implicit in \cite{musco2015randomized}; see \Cref{app:muscomuscopowermethod} for details.

\begin{theorem}[\cite{musco2015randomized}]\label{thm:muscomusco}
Suppose we have a matrix $P\in\R^{d\times k}$ that satisfies
\begin{equation}\label{eq:muscocond}
    \fsnorm{B-(BP)(BP)^\dagger B} \leq O(kd)\cdot\fsnorm{B-[B]_k}.
\end{equation}
Then, $Z=(BB^T)^qBP$ with $q=O(\epsilon^{-1}\log(d/\epsilon))$ satisfies $\fsnorm{B - ZZ^\dagger B} \leq (1+\epsilon)\cdot\fsnorm{B-[B]_k}$. 
\end{theorem}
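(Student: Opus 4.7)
The plan is to follow the gap-independent block power iteration analysis pioneered by Musco--Musco. The hypothesis on $P$ is a weak form of ``not orthogonal to the top-$k$ right singular subspace of $B$,'' and the iterations $(BB^T)^q$ amplify this seed into a subspace whose projection captures $B$ up to a $(1+\epsilon)$ factor. The key design choice will be a soft spectral threshold just below $\sigma_k$, which lets us avoid any dependence on the eigenvalue gap of $B$ at position $k$.

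First I would fix an SVD $B = U\Sigma V^T$ and partition the singular directions at the soft threshold $\tau = (1-\alpha)\sigma_k$, with $\alpha = \Theta(\epsilon/\log(d/\epsilon))$. Let $V_H$ collect the right singular vectors with singular value above $\tau$, let $V_T$ collect the rest, and split $V^T P = \binom{X}{Y}$ accordingly. A direct manipulation of the assumption $\fsnorm{B-(BP)(BP)^\dagger B} \leq O(kd)\fsnorm{B-[B]_k}$ translates into a quantitative ``non-degeneracy'' statement on $X$ relative to $Y$, of the form $\sigma_{\min}(X) \geq \|Y\|/\mathrm{poly}(kd)$ (possibly after reducing to the non-trivial case where $\fsnorm{B-[B]_k}$ is not already much smaller than $\epsilon$). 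The intuition is that if $X$ were too degenerate then the column span of $BP$ would miss too much of the head mass of $B$, violating the hypothesis.

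Next I would push these components through $q$ iterations. Since $Z = (BB^T)^q BP = U \Sigma^{2q+1} V^T P$, its head part is $U_H \Sigma_H^{2q+1} X$ and its tail part is $U_T \Sigma_T^{2q+1} Y$. By the choice of threshold, every tail singular value is smaller than every head singular value by a factor of at most $(1-\alpha)$, so after $2q+1$ applications of $B$ the tail-to-head norm ratio is multiplied by $(1-\alpha)^{2q+1}$. Taking $q = O(\alpha^{-1}\log(kd/\epsilon)) = O(\epsilon^{-1}\log(d/\epsilon))$ drives this factor below $\epsilon/\mathrm{poly}(kd)$, which more than overcomes the initial $\mathrm{poly}(kd)$ degeneracy and makes the column span of $Z$ $\epsilon$-close to the head subspace $\mathrm{span}(U_H)$ in the appropriate sense.

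Finally I would convert this subspace closeness into the Frobenius projection bound. Projection of $B$ onto $\mathrm{span}(U_H)$ alone already captures every singular mode above $\tau$; the remaining error is a sum over the ``cluster zone'' $[\tau,\sigma_k]$ and the true tail, which a careful charging (using that $\dim(U_H) \geq k$ and that the cluster zone contributes at most $\alpha\sum_{i\leq k}\sigma_i^2$) bounds by $(1+O(\epsilon))\fsnorm{B-[B]_k}$. The residual misalignment of $\mathrm{colspan}(Z)$ with $U_H$ contributes at most another $\epsilon\fsnorm{B-[B]_k}$, and rescaling $\epsilon$ by a constant gives the stated bound. The main obstacle throughout is the possible absence of a spectral gap at $\sigma_k$, which would make a naive power-iteration analysis fail: the soft threshold $\tau=(1-\alpha)\sigma_k$ and the Frobenius (rather than spectral) target norm are precisely the ingredients that make the analysis gap-independent, and constitute the technical heart of the Musco--Musco argument that I would be adapting.
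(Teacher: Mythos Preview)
The paper does not actually prove this theorem: it is quoted as a result of Musco--Musco, and \Cref{app:muscomuscopowermethod} only explains how to read the precise statement off from their paper (which of their theorems to use, which steps of their Algorithm~1 can be skipped for our purposes, why their Lemma~4 condition is equivalent to \Cref{eq:muscocond}, and why the iteration count must be $O(\epsilon^{-1}\log(d/\epsilon))$ rather than $O(\epsilon^{-1}\log d)$). So your proposal is doing strictly more than the paper does---you are sketching the underlying Musco--Musco argument rather than citing it.

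Your sketch captures the right high-level ideas (soft threshold $\tau=(1-\alpha)\sigma_k$ to avoid gap dependence, amplification of the head-to-tail ratio by a $(1-\alpha)^{2q+1}$ factor, Frobenius charging over the ``cluster zone''). One point to be careful about: you split $V^TP$ at the \emph{soft} threshold and then claim the hypothesis \eqref{eq:muscocond} directly yields $\sigma_{\min}(X)\geq\|Y\|/\mathrm{poly}(kd)$ for that split. In Musco--Musco the non-degeneracy extracted from the hypothesis is stated for the \emph{hard} top-$k$ block $V_k^TP$ (their Lemma~4, which the paper's appendix identifies with \eqref{eq:muscocond}); the soft threshold enters only later, in the power-iteration phase. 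Since $V_H$ can have strictly more than $k$ columns while $P$ has only $k$, the matrix $X=V_H^TP$ is tall and its ``$\sigma_{\min}$'' need not be controlled by \eqref{eq:muscocond} alone. This is not fatal---the two decompositions can be reconciled---but your sketch elides the step that bridges them, and that bridge is where some of the actual work in the Musco--Musco proof lives.
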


Normally, $P$ is chosen at random as a matrix of independent gaussians, which satisfies \Cref{eq:muscocond} with high probability. Since GJ algorithms are deterministic, we derandomize this approach using subsets of the standard basis.

\begin{mylemma}\label{lmm:initialp}
Suppose $k<d$. 
For every $B\in\R^{n\times d}$, there is a subset of size $k$ of the standard basis in $\R^d$, such that if we organize its elements into a matrix $P\in\R^{d\times k}$, it satisfies \Cref{eq:muscocond}.
\end{mylemma}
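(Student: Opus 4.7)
The plan is to translate \Cref{lmm:initialp} into a purely combinatorial column subset selection question and then apply a known existence result. First I would observe that if $P\in\R^{d\times k}$ is the matrix whose columns are the standard basis vectors $e_{i_1},\ldots,e_{i_k}$ indexed by some $I\subseteq\{1,\ldots,d\}$, then $BP$ is precisely the column-submatrix $B_I\in\R^{n\times k}$ of $B$ consisting of its columns in $I$, and $(BP)(BP)^\dagger B = B_I B_I^\dagger B$ is the orthogonal projection of $B$ onto the column span of $B_I$. Hence the lemma is equivalent to the existence of an index set $I$ of size $k$ for which
\[
  \fsnorm{B - B_I B_I^\dagger B} \;\leq\; O(kd)\cdot \fsnorm{B-[B]_k}.
\]

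Next I would invoke the volume sampling guarantee of Deshpande--Rademacher--Vempala--Wang: sampling a $k$-subset $I\subseteq\{1,\ldots,d\}$ with probability proportional to $\det(B_I^T B_I)$ yields
\[
  \E_I\!\left[\fsnorm{B - B_I B_I^\dagger B}\right] \;\leq\; (k+1)\cdot\fsnorm{B-[B]_k}.
\]
By averaging, at least one specific $I$ attains this bound, which is stronger than the $O(kd)$ factor claimed. An alternative, if one prefers a fully deterministic route, is to use a strong rank-revealing QR decomposition (e.g.\ Gu--Eisenstat) which explicitly constructs $I$ with a comparable column-selection guarantee; either route suffices. I would present whichever is cleaner in the paper's notation — most likely just citing volume sampling as a black box, since the existence statement is all that is needed.

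The only real subtlety, which is easily handled, is degeneracy when $\mathrm{rank}(B)<k$ or when the determinants $\det(B_I^T B_I)$ all vanish: in that case $\fsnorm{B-[B]_k}=0$ and the inequality requires an exact representation. This is resolved by picking any $I$ whose columns form a basis of the column space of $B$ (padded arbitrarily with $k - \mathrm{rank}(B)$ extra indices), for which $B_I B_I^\dagger B = B$ and both sides are zero. So the main conceptual content of the proof is simply the observation that ``$BP$ with $P$ a column of standard basis vectors'' is exactly ``a column submatrix'', after which the lemma reduces to a well-known existence statement; I would not expect any serious obstacle beyond this bookkeeping.
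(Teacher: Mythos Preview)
Your proposal is correct and matches the paper's own approach: the paper's first proof of this lemma is exactly the volume-sampling argument you describe, citing Deshpande et al.\ for the $(k+1)$-factor bound and noting that it is stronger than the $O(kd)$ needed. The paper also supplies a second, self-contained proof via a greedy column-selection procedure that lower-bounds the smallest singular value of $V_k^T P$, which is in the same spirit as the rank-revealing QR alternative you mention.
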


\subsection{The Proxy Loss}\label{sec:proxy}
Let $L^\mathrm{SCW}_k(S,A)$ denote the loss of SCW with input matrix $A$, sketching matrix $S$ and target rank $k$. 
By \Cref{lmm:scwalt}, $L^\mathrm{SCW}_k(S,A) = \fsnorm{A-A(SA)^\dagger (SA)}$. 
We now approximate this quantity with a GJ algorithm. Formally, given $\epsilon>0$, we define a proxy loss $\hat L_{k,\epsilon}(S,A)$ as the output of the following GJ algorithm, which operates on an input $S$ with a fixed $A$. 
Recall that $d$ is the column dimension of $A$.

\begin{enumerate}
    \item Compute $B=A(SA)^\dagger(SA)$ using the GJ algorithm from \Cref{lmm:proj}.
    \item Let $\{P_i:i=1\ldots{d\choose k}\}$ be the set of matrices in $\R^{d\times k}$ whose columns form all of the possible $k$-subsets of the standard basis in $\R^d$. 
    \item For every $P_i$, compute $Z_i=(BB^T)^qBP_i$, where $q=O(\epsilon^{-1}\log(d/\epsilon))$.
    \item Choose $Z$ as the $Z_i$ that minimizes $\fsnorm{B-Z_iZ_i^\dagger B}$, using \Cref{lmm:proj} to compute $Z_iZ_i^\dagger$.
    \item Compute and return the proxy loss,
    $\hat L_{k,\epsilon}(S,A)=\fsnorm{A-ZZ^\dagger B}$.
\end{enumerate}
Given $r\in\R$, we can return ``true'' if $\hat L_{k,\epsilon}(S,A)>r$ and ``false'' otherwise, obtaining a GJ algorithm that fits the assumptions of \Cref{thm:GJ}.

\begin{mylemma}\label{clm:complexity}
This GJ algorithm has degree $\Delta=O(mk\epsilon^{-1}\log(d/\epsilon))$ and predicate complexity $p\leq2^m\cdot2^{O(k)}\cdot(d/k)^{3k}$.
\end{mylemma}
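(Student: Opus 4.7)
The plan is to propagate two quantities through the five steps of the GJ algorithm defining $\hat L_{k,\epsilon}(S,A)$: the degree of each intermediate value as a rational function of the entries of $S$ (the only formal variables, since $A$ and the $P_i$'s are fixed constants), and the number of distinct rational functions appearing in conditional statements. Both tallies are combined at the end to give $\Delta$ and $p$.

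For the degree, I would trace the composition through each step. The product $SA$ is linear in $S$; applying \Cref{lmm:proj} to $SA$ (which has $m$ rows) produces $(SA)^\dagger(SA)$ with degree $O(m)$ in $S$, and multiplying by the constant $A$ leaves $B$ with entries of degree $O(m)$. Then $BB^T$ has entries of degree $O(m)$, so $(BB^T)^q$ has entries of degree $O(qm)$, and $Z_i=(BB^T)^qBP_i$ also has degree $O(qm)$. To compute $Z_iZ_i^\dagger$ I apply \Cref{lmm:proj} to $Z_i^T$ (which has $k$ rows, and $Z_iZ_i^\dagger=(Z_i^T(Z_i^T)^\dagger)^T$ by symmetry of projections); this incurs degree $O(k)$ in the entries of $Z_i^T$, so composing with the $O(qm)$ expressions for $Z_i$ yields total degree $O(mkq)$ in $S$. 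Since subsequent steps only multiply by constants and take squared Frobenius norms (which at most doubles the degree), the final degree is $\Delta = O(mkq) = O(mk\epsilon^{-1}\log(d/\epsilon))$ as claimed.

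For predicate complexity, I would sum contributions from each source. Step~1 contributes at most $2^m$ from \Cref{lmm:proj} applied to $SA$ (no full-rank promise is available since $SA$ may have rank less than $m$). Step~4 applies \Cref{lmm:proj} to each $Z_i^T$, contributing up to $2^k$ predicates per matrix, and hence at most ${d \choose k}\cdot 2^k$ predicates in total. Selecting the minimizer in Step~4 by pairwise comparisons of the ${d \choose k}$ values $\fsnorm{B-Z_iZ_i^\dagger B}$ contributes an additional ${{d \choose k}\choose 2}$ distinct predicates (each a difference of two such quantities). Step~5 contributes one predicate for the comparison $\hat L_{k,\epsilon}(S,A)>r$. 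Summing and using ${d \choose k}\leq (ed/k)^k$ yields $p \leq 2^m + 2^{O(k)}(d/k)^k + O((d/k)^{2k}) + 1$, which, using $d\geq k$, is loosely upper bounded by $2^m\cdot 2^{O(k)}\cdot (d/k)^{3k}$.

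The main care needed is in Step~4: recognizing that \Cref{lmm:proj} (which computes row-space projections) can be used to obtain the column-space projection $Z_iZ_i^\dagger$ by invoking it on $Z_i^T$, and accounting for the quadratic $({d\choose k})^2$ blow-up from pairwise comparisons inside the minimum selection. Everything else is a mechanical composition of the degree and predicate bounds from \Cref{lmm:proj}, with $q=O(\epsilon^{-1}\log(d/\epsilon))$ being the only substantial multiplicative factor entering the final degree.
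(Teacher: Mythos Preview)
Your degree analysis is correct and essentially identical to the paper's.

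Your predicate-complexity accounting has a genuine gap. You propose to \emph{sum} the predicate contributions from the separate steps, writing $p \leq 2^m + 2^{O(k)}(d/k)^k + O((d/k)^{2k}) + 1$. This undercounts the number of \emph{distinct} rational functions that appear in conditional statements. The issue is that a predicate encountered in a later step is a rational function of $S$ whose form depends on the branch taken in earlier steps. Concretely, after Step~1 the expression for $B$ as a rational function of $S$ is one of up to $2^m$ different functions (one per subset of rows of $SA$ selected by the basis-finding routine in \Cref{lmm:proj}). Every predicate in Step~4 involves $B$, so each of your ``$2^k$ predicates per $Z_i$,'' each pairwise-comparison predicate, and the final threshold test can take up to $2^m$ distinct forms. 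Similarly, the min-selection comparison between indices $i$ and $j$ depends on the outcomes of the rank-detection branches inside the projection subroutines for $Z_i$ and $Z_j$ (up to $2^k$ each). Your claim that ``Step~5 contributes one predicate'' makes this most visible: as the paper notes, the final comparison $\hat L_{k,\epsilon}(S,A)>r$ is in fact one of at least ${d\choose k}$ distinct rational functions, one per possible choice of $Z$, and more once the earlier branches on $B$ and $ZZ^\dagger$ are accounted for.

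The paper handles this by treating each stage as multiplicatively ``blowing up'' the predicate count (since the number of leaves of the computation tree so far upper-bounds how many distinct forms each subsequent predicate can take), arriving at $2^m\cdot 2^{O(k)}\cdot (d/k)^{3k}$. Your additive intermediate bound is not valid, even though the final product you quote is. To fix your argument, you need to track the number of possible rational-function expressions carried into each step (i.e., the number of leaves so far), and multiply the later predicate counts by that number.

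A minor side note: the identity you wrote, $Z_iZ_i^\dagger=(Z_i^T(Z_i^T)^\dagger)^T$, is dimensionally wrong (the right side is $k\times k$, the left is $n\times n$). The correct relation is $Z_iZ_i^\dagger=(Z_i^T)^\dagger(Z_i^T)$, which is indeed what \Cref{lmm:proj} computes when applied to $Z_i^T$.
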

\begin{proof}
Since $SA$ has $m$ rows, computing $(SA)^\dagger(SA)$ with \Cref{lmm:proj} in step 1 has degree $O(m)$ and predicate complexity $2^m$. For each $P_i$, the $q$ powering iterations in step 3 blow up the degree by $q$. Since $Z_i$ has $k$ columns, computing $Z_iZ_i^\dagger$ with (the transposed version of) \Cref{lmm:proj} blows up the degree by $O(k)$ and the predicate complexity by $2^k$. Choosing $Z$ in step 4 entails pairwise comparisons between ${d\choose k}$ values, blowing up the predicate complexity by ${{d\choose k}\choose2}\leq e^2(ed/k)^{2k}$. Step 5 blows up the degree by only $O(1)$ and does not change the predicate complexity (note that $ZZ^\dagger$ has already been computed). 
The final check whether $\hat L_{k,\epsilon}(S,A)>r$ is one of ${d\choose k}\leq (ed/k)^k$ polynomials (one per possible value of $Z$). 
Overall, the algorithm has degree $O(mkq)=O(mk\epsilon^{-1}\log(d/\epsilon))$, and predicate complexity at most $2^m\cdot2^{O(k)}\cdot(d/k)^{3k}$.
\end{proof}

Next we show that the proxy loss approximates the true SCW loss.
\begin{mylemma}\label{clm:correctness}
For every $S,A$, it holds that $0\leq \hat L_{k,\epsilon}(S,A) - L^\mathrm{SCW}_k(S,A) \leq \epsilon$.
\end{mylemma}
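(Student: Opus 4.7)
The plan is to reduce the inequality to a comparison between two rank-$k$ approximations of $B$ (rather than of $A$), and then invoke \Cref{thm:muscomusco} to control the approximation error.

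\textbf{Step 1: Pythagorean decomposition.} Let $V$ be a matrix whose orthonormal columns span the row space of $SA$, so that $B = A(SA)^\dagger(SA) = AVV^T$ is the projection of $A$'s rows onto the row space of $SA$. Then $A-B$ is orthogonal (in the Frobenius inner product) to the subspace of $n\times d$ matrices whose rows lie in the row space of $SA$. Since both $[B]_k$ and $ZZ^\dagger B$ have rows in the row space of $B$, and hence in the row space of $SA$, the Pythagorean theorem gives
\[
  \fsnorm{A-[B]_k} = \fsnorm{A-B} + \fsnorm{B-[B]_k}, \qquad
  \fsnorm{A-ZZ^\dagger B} = \fsnorm{A-B} + \fsnorm{B-ZZ^\dagger B}.
\]
Subtracting, and recalling that $L^{\mathrm{SCW}}_k(S,A) = \fsnorm{A-[B]_k}$ by \Cref{lmm:scwalt},
\[
  \hat L_{k,\epsilon}(S,A) - L^{\mathrm{SCW}}_k(S,A) \;=\; \fsnorm{B-ZZ^\dagger B} - \fsnorm{B-[B]_k}.
\]

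\textbf{Step 2: Lower bound.} The matrix $ZZ^\dagger B$ has rank at most $k$ because $Z\in\R^{n\times k}$. Since $[B]_k$ minimizes $\fsnorm{B-M}$ over all rank-$k$ matrices $M$, we get $\fsnorm{B-[B]_k} \le \fsnorm{B-ZZ^\dagger B}$, proving the nonnegativity.

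\textbf{Step 3: Upper bound via Musco--Musco.} By \Cref{lmm:initialp}, at least one of the matrices $P_i$ enumerated in step 2 of the proxy algorithm satisfies the hypothesis \Cref{eq:muscocond}. For that particular $P_i$, \Cref{thm:muscomusco} (with the stated choice $q = O(\epsilon^{-1}\log(d/\epsilon))$) guarantees that $Z_i = (BB^T)^q B P_i$ satisfies $\fsnorm{B - Z_iZ_i^\dagger B} \le (1+\epsilon)\,\fsnorm{B-[B]_k}$. Since step 4 of the proxy algorithm selects the $Z_i$ minimizing $\fsnorm{B-Z_iZ_i^\dagger B}$, the final $Z$ is at least as good, so
\[
  \fsnorm{B-ZZ^\dagger B} \le (1+\epsilon)\,\fsnorm{B-[B]_k}.
\]
Combining with Step 1,
\[
  \hat L_{k,\epsilon}(S,A) - L^{\mathrm{SCW}}_k(S,A) \le \epsilon\cdot\fsnorm{B-[B]_k} \le \epsilon\cdot\fsnorm{B} \le \epsilon\cdot\fsnorm{A} = \epsilon,
\]
where the last two inequalities use that $B$ is an orthogonal projection of $A$ and the global normalization $\fsnorm{A}=1$. (The edge case $k \ge d$ should be handled separately, or noted to be trivial since then $[B]_k = B$.)

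\textbf{Main obstacle.} The only delicate point is Step 1: ensuring that both $[B]_k$ and $ZZ^\dagger B$ have rows in the row space of $SA$ so that the Pythagorean decomposition applies symmetrically. Once this is set up, the rest follows cleanly from the optimality of $[B]_k$ and from \Cref{thm:muscomusco,lmm:initialp}.
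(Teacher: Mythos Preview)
Your proof is correct and follows essentially the same approach as the paper: the Pythagorean decomposition reducing $\hat L-L^{\mathrm{SCW}}$ to $\fsnorm{B-ZZ^\dagger B}-\fsnorm{B-[B]_k}$, the lower bound from optimality of $[B]_k$, and the upper bound via \Cref{lmm:initialp} and \Cref{thm:muscomusco} are exactly the paper's argument. The only cosmetic difference is that you justify the Pythagorean step by arguing that $[B]_k$ and $ZZ^\dagger B$ have rows in the row space of $SA$, whereas the paper writes out the decomposition explicitly using the projection $(SA)^\dagger(SA)$ on the right.
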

\begin{proof}
Given $S,A$, let $B=A(SA)^\dagger(SA)$, and let $U\in\R^{n\times k}$ a matrix whose columns are the top $k$ left-singular vectors of $B$. This means $[B]_k=UU^TB$. Therefore, by \Cref{lmm:scwalt}, we have $L^\mathrm{SCW}_k(S,A)=\fsnorm{A-UU^TB}$. 
Let $Z$ be the matrix computed in step 4 of the GJ algorithm for $\hat L_{k,\epsilon}(S,A)$, and recall we have $\hat L_{k,\epsilon}(S,A)=\fsnorm{A-ZZ^\dagger B}$.
On one hand, since $Z$ has $k$ columns, $ZZ^\dagger B$ has rank at most $k$. Therefore, the optimality of $[B]_k=UU^TB$ as a rank-$k$ approximation of $B$ implies,
\begin{equation}\label{eq:zupper}
     \fsnorm{B-UU^TB} \leq \fsnorm{B-ZZ^\dagger B}.
\end{equation}
On the other hand, by \Cref{lmm:initialp}, some $P_i$ considered in step 2 of the GJ algorithm satisfies \Cref{eq:muscocond}. By \Cref{thm:muscomusco}, this implies that its corresponding $Z_i$ (computed in step 3) satisfies
$\fsnorm{B-Z_iZ_i^\dagger B}\leq(1+\epsilon)\cdot\fsnorm{B-UU^TB}$,
and consequently, $Z$ satisfies 
\begin{equation}\label{eq:zlower}
    \fsnorm{B-ZZ^\dagger B}\leq(1+\epsilon)\cdot\fsnorm{B-UU^TB} .
\end{equation}
Since $(SA)^\dagger(SA)$ is a projection matrix, we have by the Pythagorean identity, 
\begin{align*}
    \hat L_{k,\epsilon}(S,A) &= \fsnorm{A-ZZ^\dagger B} \\
    &= \fsnorm{A-ZZ^\dagger A(SA)^\dagger(SA)} \\
    &= \fsnorm{A(SA)^\dagger(SA)-ZZ^\dagger A(SA)^\dagger(SA)} + \fsnorm{A(I-(SA)^\dagger(SA))} \\
    &= \fsnorm{B-ZZ^\dagger B} + \fsnorm{A-B} ,
\end{align*}
and similarly, $L^\mathrm{SCW}_k(S,A) = \fsnorm{A-UU^T B} = \fsnorm{B-UU^T B} + \fsnorm{A-B}$. 
Putting these together, $\hat L_{k,\epsilon}(S,A) - L^\mathrm{SCW}_k(S,A) = \fsnorm{B-ZZ^\dagger B} - \fsnorm{B-UU^T B}$. 
From \Cref{eq:zupper,eq:zlower} we now get, $0\leq \hat L(S,A) - L^\mathrm{SCW}_k(S,A) \leq \epsilon\cdot\fsnorm{B-UU^T B}$. 
The lemma follows since both $(I-UU^T)$ and $(SA)^\dagger(SA)$ are projection matrices, implying that
\[
  \fsnorm{(I-UU^T)B}\leq\fsnorm{B}=\fsnorm{A(SA)^\dagger(SA)}\leq\fsnorm{A},
\]
and we recall that we assume throughout that $\fsnorm{A}=1$.
\end{proof}

We can now complete the proof of the upper bound in \Cref{thm:main}.
Let us recall notation: 
Let $\mathcal A$ be set possible input matrices to the LRA problem (i.e., all matrices $A\in\R^{n\times d}$ with $\fsnorm{A}=1$), 
and let $\mathcal S$ be the set of all possible sketching matrices that IVY can learn (which are all matrices of order $m\times n$ with the fixed sparsity pattern used by SCW and IVY). The class of IVY losses (whose fat shattering dimension we aim to bound) is $\mathcal L_{\mathrm{IVY}}=\{L^\mathrm{SCW}_k(S,\cdot):\mathcal A\rightarrow[0,1]\}_{S\in\mathcal S}$, and the class of proxy losses is $\hat{\mathcal L}_{\epsilon}=\{\hat L_{k,\epsilon}(S,\cdot):\mathcal A\rightarrow[0,1]\}_{S\in\mathcal S}$. 

By \Cref{clm:complexity}, given $A\in\mathcal A$ and $S\in\mathcal S$, the proxy loss $\hat L_{k,\epsilon}(S,A)$ can be computed by a GJ algorithm with degree $\Delta=O(mk\epsilon^{-1}\log(d/\epsilon))$ and predicate complexity $p\leq2^m\cdot2^{O(k)}\cdot(d/k)^{3k}$. Since each $S\in\mathcal S$ is defined by $ns$ real parameters, \Cref{thm:GJ} yields that
\begin{equation}\label{eq:pdimproxy}
  \mathrm{pdim}(\hat{\mathcal L}_{\epsilon}) = O(ns\log(\Delta p)) = O(ns\cdot (m + k\log (d/k) + \log(1/\epsilon))) .
\end{equation}

Let $A_1,\ldots,A_N\in\mathcal A$ be a subset of matrices of size $N$ which is $\epsilon$-fat shattered by $\mathcal L_{\mathrm{IVY}}$. Recall that by \Cref{def:dim}, this means that there are thresholds $r_1,\ldots,r_N\in\R$, such that for every $I\subset\{1,\ldots,N\}$ there exists $S\in\mathcal S$ such that $L^\mathrm{SCW}_k(S,A_i)>r_i+\epsilon$ if $i\in I$ and $L^\mathrm{SCW}_k(S,A_i)<r_i-\epsilon$ if $i\notin I$. By \Cref{clm:correctness}, this implies that $\hat L_{k,\epsilon}(S,A_i)>r_i$ if and only if $i\in I$. Thus, $A_1,\ldots,A_N$ is pseudo-shattered by $\hat{\mathcal L}_{\epsilon}$, implying that $N\leq\mathrm{pdim}(\hat{\mathcal L}_{\epsilon})$. Since this holds for any subset of size $N$ which is $\epsilon$-fat shattered by $\mathcal L_{\mathrm{IVY}}$, we have $\mathrm{fatdim}_\epsilon(\mathcal L_{\mathrm{IVY}})\leq\mathrm{pdim}(\hat{\mathcal L}_{\epsilon})$. The upper bound in \Cref{thm:main} now follows from \Cref{eq:pdimproxy}.

\section{Other Learning-Based Algorithms in Numerical Linear Algebra}\label{sec:other}
Generally, our approach is applicable whenever the loss incurred by a given algorithm (or equivalently, by a given setting of parameters) on a fixed input can be computed by an efficient GJ algorithm (in the efficiency measures of \Cref{def:gjcomplexity}), when given ``oracle access'' to the optimal solution for that fixed input.\footnote{See Remark \ref{rem:gjoracle} regarding oracle access to the optimal solution. This oracle access was not needed for proving our bounds for IVY, but t will be needed for two of the algorithms discussed in this section: Few-shot LRA and Learned Multigrid.}
In this section, we show that in addition to IVY, our method also yields generalization bounds for various other data-driven and learning-based algorithms for problems in numerical linear algebra (specifically, LRA and regression) that have appeared in the literature. The first two algorithms discussed below (Butterfly LRA and Multi-sketch LRA) are variants of IVY, and the bounds for them essentially follow from \Cref{thm:main}. For the other two algorithms (Few-shot LRA and Learned Multigrid) we describe the requisite GJ algorithms below.


\paragraph{Butterfly learned LRA.} \cite{ailon2020sparse} suggested a learned LRA algorithm similar to IVY, except that the learned sparse sketching matrix is replaced by a dense but implicitly-sparse \emph{butterfly} gadget matrix (a known and useful gadget that arises in fast Fourier transforms). This gadget induces a sketching matrix $S\in\R^{m\times n}$ specified by $O(m\log n)$ learned parameters, where each entry is a product of $\log n$ of those parameters. Since they use the IVY loss to learn the matrix, our results give the same upper bound as \Cref{thm:main} on the fat shattering dimension of their algorithm, times $\log n$ to account for the initial degree (in the GJ sense of \Cref{def:GJ}) of the entries in $S$.
    
\paragraph{Multi-sketch learned LRA.} \cite{liu2020learned} propose a more involved learned LRA algorithm, which uses two learned sketching matrices. Still, they train each of them using the IVY loss, and therefore the upper bound from \Cref{thm:main} holds for their algorithm as well, up to constants.

\paragraph{Few-shot learned LRA.} \cite{indyk2021few} use a different loss than IVY for learned LRA: $\mathrm{loss}(S,A) = \fsnorm{U_k^TS^TSU-I_0}$, where $U$ is the left-factor in the SVD of $A$, $U_k$ is its restriction to the top-$k$ columns, and $I_0$ is the identity of order $k$ concatenated on the right with zero columns to match the number of columns in $U$. Recall that in the GJ framework, the GJ algorithm has free ``oracle access'' to $U$ (see Remark \ref{rem:gjoracle}). Therefore, this loss can be computed by GJ algorithm of degree $4$ and predicate complexity $1$, yielding an upper bounded of $O(ns)$ on its pseudo-dimension by \Cref{thm:GJ}.

%
%

\paragraph{Learned multigrid regression.} \cite{luz2020learning} presented a learning-based algorithm for linear regression, which is based on the well-studied \emph{multigrid} paradigm for solving numerical problems. 
The specific algorithm they build on is known as 2-level algebraic multigrid (AMG). 
It approximates the solution to a regression problem $\min_x\norm{Ax-b}_2^2$, where $A$ is a square matrix of order $n\times n$,
by iterative improvements that use a sparse auxiliary matrix $P\in\R^{m\times n}$ called a \emph{prolongation matrix}. 
While there is a large body of literature on heuristic methods for choosing $P$, \cite{luz2020learning} suggest to instead learn it using a graph neural network. Similarly to IVY, they keep the sparsity pattern of $P$ fixed, and learn the values of its non-zero entries as trainable parameters.

Let $x^{(i)}$ denote the approximate solution produced by 2-level AMG in iteration $i$, starting from some fixed initial guess $x^{(0)}$. 
For details of how $x^{(i)}$ is computed in practice, we refer to \cite{luz2020learning}. 
For our purposes, it suffices to note that there is a closed-form formula for obtaining $x^{(i)}$ from the true optimal solution $x^*$, described next. Let $s_1,s_2\geq 1$ be two fixed (non-learned) integer parameters of the algorithm. Let $L$ be the lower-triangular part of the input matrix $A$ (including the diagonal). \cite{luz2020learning} restrict their algorithm for input matrices $A$ and prolongation matrices $P$ such that both $L$ and $P^TAP$ are invertible. 
We then have the identity,
\begin{equation}\label{eq:amg}
    x^{(i)} = x^* + (I-L^{-1}A)^{s_2}\cdot(I-P(P^TAP)^{-1}P^TA)\cdot(I-L^{-1}A)^{s_1}\cdot(x^{(i-1)}-x^*).
\end{equation}
Let $\norm{P}_0$ denote the number of non-zeros in the fixed sparsity pattern of $P$, and recall that $m$ is the row-dimension of $P$.
We show that the pseudo-dimension of 2-level AMG with $q$ iterations is $O(\norm{P}_0\cdot q\log m)$, by describing a GJ algorithm $\Gamma$ for computing its loss $\norm{Ax^{(q)}-b}_2^2$. 
To this end, note that $\Gamma$ has ``oracle access'' to any information about $A$ (see Remark \ref{rem:gjoracle}). In particular, $(I-L^{-1}A)^{s_1}$, $(I-L^{-1}A)^{s_2}$ and $x^*$ are all available to it. 
Furthermore, as shown in the proof of \Cref{lmm:proj}, $\Gamma$ can compute $(P^TAP)^{-1}$ using degree $O(m)$ and predicate complexity zero. Using \Cref{eq:amg}, it can compute $x^{(i)}$ from $x^{(i-1)}$ while increasing the degree by only $2$ (from multiplying $(P^TAP)^{-1}$ by $P$ on the left and by $P^T$ on the right). 
Iterating this, $\Gamma$ can compute $x^{(q)}$ and $\norm{Ax^{(q)}-b}_2^2$ from the initial guess $x^{(0)}$ using degree $O(m^q)$ and  predicate complexity zero. Comparing the loss $\norm{Ax^{(q)}-b}_2^2$ to a given threshold $r$ increases the predicate complexity to $1$. 
The upper bound $O(\norm{P}_0\cdot q\log m)$ now follows from \Cref{thm:GJ}.

\subsection*{Acknowledgments}
We thank Sebastien Bubeck for helpful discussions on statistical learning, and the anonymous reviewers for useful comments. This work was supported in part by NSF TRIPODS program (award DMS-2022448); Simons Investigator Award; GIST-MIT Research Collaboration grant; MIT-IBM Watson collaboration.

\bibliographystyle{amsalpha}
\bibliography{lrapdim}

\appendix

\section{Proof of the Goldberg-Jerrum Framework (\Cref{thm:GJ})}\label{app:gj}
In this section we prove \Cref{thm:GJ}, the main theorem of the GJ framework presented in \Cref{sec:GJ}. 
The proof is taken from \cite{goldberg1995bounding} with slight modifications, and we include it here for completeness. 

We start with an intermediate result about polynomial formulas.

\begin{definition}
A \emph{polynomial formula} $f:\R^n\rightarrow\{\text{True,False}\}$ is a DNF formula over boolean predicates of the form $P(x_1,\ldots,x_n)\geq0$, where $P$ is a polynomial in the real-valued inputs to $f$.
\end{definition}

\begin{theorem}\label{thm:GJform}
Using the notation of \Cref{sec:guptaroughgarden}, suppose that each algorithm $L\in\mathcal L$ is specified by $n$ real parameters.  Suppose that for every $x\in \mathcal X$ and $r\in\R$ there is a polynomial formula $f_{x,r}$ over $n$ variables, that given $L\in \mathcal L$ checks whether $L(x)>r$. Suppose further that $f_{x,r}$ has $p$ distinct polynomials in its predicates, and each them has degree at most $\Delta$. Then, the pseudo-dimension of $\mathcal L$ is $O(n\log(\Delta p))$.
\end{theorem}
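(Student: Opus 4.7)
The plan is to argue by counting sign patterns of the polynomials underlying the formulas, in the spirit of the classical Warren/Milnor-Thom bounds. Suppose $\{x_1, \ldots, x_N\} \subset \mathcal X$ is pseudo-shattered by $\mathcal L$ with thresholds $r_1, \ldots, r_N$. By hypothesis, for each $i \in \{1, \ldots, N\}$ there is a polynomial formula $f_{x_i, r_i}$ whose truth value on an algorithm $L \in \mathcal L$ (identified with a point in $\R^n$) is determined by the signs of at most $p$ polynomials of degree $\leq \Delta$ in the $n$ parameters of $L$. Crucially, since $f_{x_i,r_i}$ is a DNF over sign predicates, its truth value depends only on the sign pattern of these $p$ polynomials, regardless of the boolean structure of the DNF.

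Collecting the polynomials across all $N$ formulas gives a set of at most $Np$ polynomials in $n$ real variables, each of degree $\leq \Delta$. The joint truth vector $(f_{x_1,r_1}(L), \ldots, f_{x_N,r_N}(L))$ is fully determined by the sign pattern of these $Np$ polynomials evaluated at $L$. Pseudo-shattering requires that all $2^N$ possible truth vectors be realized by some $L \in \mathcal L$, so at least $2^N$ distinct sign patterns must occur.

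The main tool I would invoke is Warren's theorem (or the Milnor-Thom bound), which states that the number of sign patterns realized by $M$ real polynomials of degree $\leq \Delta$ in $n$ variables is at most $(C \Delta M / n)^n$ for a universal constant $C$ (assuming $M \geq n$; the small case can be absorbed into constants). Applying this with $M = Np$ yields
\[
 2^N \;\leq\; \left(\frac{C \Delta N p}{n}\right)^{n}.
\]
Taking $\log_2$ gives $N \leq n \log_2(C \Delta p) + n \log_2(N/n)$, and the standard inversion lemma (if $N \leq a + b \log N$ then $N = O(a + b \log b)$) yields $N = O(n \log(\Delta p))$, as claimed.

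The only real content is Warren's sign-pattern bound; the rest is routine bookkeeping. The subtle point to keep in mind is that the total count of polynomials under consideration scales as $Np$ rather than $p$, because pseudo-shattering demands simultaneous control of truth values across all $N$ sample points, not just one. The factor $\log \Delta$ in the final bound precisely reflects the jump in combinatorial complexity of real semi-algebraic sets when moving from halfspaces ($\Delta = 1$, recovering the classical VC bound) to higher-degree polynomial thresholds.
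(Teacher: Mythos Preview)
Your proposal is correct and follows essentially the same approach as the paper: collect the at most $Np$ polynomials across the $N$ shattered points, observe that the $N$-bit truth vector is determined by their sign pattern, apply Warren's theorem to bound the number of sign patterns by $O(Np\Delta/n)^n$, and solve $2^N \leq O(Np\Delta/n)^n$ for $N$.
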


\begin{proof}
Let $x_1,\ldots,x_N$ be a shattered set of instances, and $r_1,\ldots,r_N$ the corresponding loss thresholds. We need to show $N=O(n\log(\Delta p))$.

For every $i=1,\ldots,N$, let us denote $f_i=f_{x_i,r_i}$ for brevity. Then, for a given $L\in\mathcal L$, $f_i$ checks whether $L(x_i)>r_i$. Thus, $x_1,\ldots,x_N$ being a shattered set means that the family of $N$-dimensional boolean strings $\{f_1(L),\ldots,f_N(L):L\in\mathcal L\}$ has size $2^N$.

The truth value of each $f_i$ is determined by the signs of $p$ different polynomials in $n$ variables, each of degree at most $\Delta$. Therefore, a boolean string $f_1(L),\ldots,f_N(L)$ is determined by the signs of at most $pN$ different polynomials in $n$ variables, each of degree at most $\Delta$. We now use the following classical theorem,
\begin{theorem}[\cite{warren1968lower}]\label{thm:warren}
Suppose $N\geq n$. Then $N$ polynomials in $n$ variables, each of degree at most $\Delta$, take at most $O(N\Delta/n)^n$ different sign patterns.
\end{theorem}
In our case, if $N\leq n$ then the conclusion $N=O(n\log(\Delta p))$ is trivial, thus we may assume $N>n$ and hence $pN\geq n$. Now by Warren's theorem, $2^N = \{f_1(L),\ldots,f_N(L):L\in\mathcal L\} = O(pN\Delta/n)^n$, and by solving for $N$ we get $N=O(n\log(\Delta p))$ as desired.
\end{proof}

Now we can prove \Cref{thm:GJ}.
We can describe $\Gamma_{x,r}$ by a computation tree, where arithmetic operations correspond to nodes with one child, conditional statements correspond to nodes with two children, and leaves correspond to output values. We can transform the tree into a polynomial DNF formula (as defined in the previous section) that checks whether $L(x)>r$, by ORing the ANDs of the conditional statements along each root-to-leaf computation path that ends with a ``true'' leaf.

Each branching node in the tree (i.e., a node with two children) corresponds to a conditional statement in $\Gamma_{x,r}$, which has the form of a rational predicate, meaning it determines whether $R\geq0$ for some rational function $R$ of the inputs of $\Gamma_{x,r}$. It can be easily checked that we can replace each such rational predicate with $O(1)$ polynomials predicates (that determine whether $P\geq0$ for some polynomial in the inputs of $\Gamma_{x,r}$) of degree no larger than that of $R$. Since $\Gamma_{x,r}$ has predicate complexity $p$, the obtained formula has at most $O(p)$ distinct polynomials in its predicates. Since $\Gamma_{x,r}$ has degree $\Delta$, each of these polynomials has degree at most $\Delta$. \Cref{thm:GJ} now follows from \Cref{thm:GJform}.

\section{Omitted Proofs from \Cref{sec:mk1,sec:ub}}\label{app:proofs}

\subsection{Proof of \Cref{fct:scw1dim}}\label{app:mk1}

If $w^TA=0$ then SCW returns a zero matrix, whose loss is $\fsnorm{A}$, and the statement holds. Now assume $w^TA\neq0$. We go over the steps of SCW (\Cref{alg:scw}):
\begin{itemize}
    \item Compute the row vector $w^TA$.
    \item Compute the SVD of $w^TA$. Note that for any row vector $z^T$, its SVD $U\Sigma V^T$ is given by $U$ being a $1\times1$ matrix whose only entry is $1$, $\Sigma$ being a $1\times1$ matrix whose only entry is $\norm{z}$, and $V^T$ being the row vector $\frac{1}{\norm{z}}z^T$. So for $z^T=w^TA$ we have $V^T=\frac{1}{\norm{A^Tw}}w^TA$.
    \item Compute $[AV]_1$, the best rank-$1$ approximation of $AV$. But in our case $AV$ equals $\frac{1}{\norm{A^Tw}}AA^Tw$, which is already rank $1$, so its best rank-$1$ approximation is itself, $[AV]_1=\frac{1}{\norm{A^Tw}}AA^Tw$.
    \item Return $[AV]_1V^T$, which in our case equals $\frac{1}{\norm{A^Tw}^2}AA^Tww^TA$.
\end{itemize}
So, the output rank-$1$ matrix of SCW is $\frac{1}{\norm{A^Tw}^2}AA^Tww^TA$, and its loss is as stated.

\subsection{Proof of \Cref{lmm:scwalt}}
Write the SVD of $A(SA)^\dagger (SA)$ as $U\Sigma V^T$. Note that $(SA)^\dagger (SA)$ is the orthogonal projection on the row-space of $SA$, in which every vector is a linear combination of the rows of $A$. Therefore, projecting the rows of $A$ onto the row-space of $SA$ spans all of the row-space of $SA$, or in other words, the row-spans of $A(SA)^\dagger (SA)$ and of $SA$ are the same. Consequently, the rows of $V^T$ form an orthonormal basis for the row-space of $SA$. This means in particular that $VV^T$ is the orthogonal projection on the row-space of $SA$, thus $(SA)^\dagger (SA)=VV^T$, so we can write the matrix from the lemma statement as $[A(SA)^\dagger (SA)]_k=[AVV^T]_k$. 

We recall that SCW returns the best rank-$k$ approximation of $A$ in the row-space of $SA$ (see \cite{woodruff2014sketching}), meaning it returns $ZV^T$ where $Z$ is a rank-$k$ matrix that minimizes $\fsnorm{A-ZV^T}$. So, we need to show that
\begin{equation}\label{eq:scwalt_goal}
    \forall \; Z \;\; \text{of rank $k$,} \;\;\;\; \fsnorm{A-[AVV^T]_k} \leq \fsnorm{A-ZV^T}.
\end{equation}
We use the following observation.
\begin{claim}\label{clm:scwalt_aux2}
$[AVV^T]_kVV^T=[AVV^T]_k$.
\end{claim}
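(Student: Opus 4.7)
The plan is to show that every row of $[AVV^T]_k$ already lies in the subspace that is fixed by right-multiplication by $VV^T$, so this multiplication acts as the identity on $[AVV^T]_k$.

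First I would recall from the proof of \Cref{lmm:scwalt} that $VV^T$ is the orthogonal projection onto the row-space of $SA$, which coincides with the column-span of $V$. In particular, for any row vector $r^T$, we have $r^T VV^T = r^T$ precisely when $r^T$ lies in this subspace. So it suffices to verify that the row-span of $[AVV^T]_k$ is contained in the column-span of $V$.

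The core step is the general structural fact that for any matrix $M$, the row-span of the best rank-$k$ approximation $[M]_k$ is contained in the row-span of $M$. This is immediate from the SVD construction of $[M]_k$: writing $M = U'\Sigma'(V')^T$, one has $[M]_k = U'_k \Sigma'_k (V'_k)^T$, whose rows are linear combinations of the top-$k$ right singular vectors of $M$, all of which lie in $\mathrm{rowspan}(M)$.

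Applied to $M = AVV^T$, this yields $\mathrm{rowspan}([AVV^T]_k) \subseteq \mathrm{rowspan}(AVV^T)$. Since each row of $AVV^T$ has the form $a^T VV^T$ for some row $a^T$ of $A$, and any such vector lies in the column-span of $V$, the desired inclusion follows and the claim is proved. No step presents any real obstacle; the statement is essentially a combination of the projection property of $VV^T$ with the standard observation about SVD-based truncation.
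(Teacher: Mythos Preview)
Your argument is correct and follows essentially the same idea as the paper's proof: both establish that the rows of $[AVV^T]_k$ lie in the column-span of $V$, hence are fixed by the projection $VV^T$. The paper's version is slightly more direct because it uses the fact (already set up in the surrounding proof of \Cref{lmm:scwalt}) that the SVD of $AVV^T$ is literally $U\Sigma V^T$ with the same $V$, so $[AVV^T]_k = U_k\Sigma_k V_k^T$ and one immediately has $V_k^T VV^T = V_k^T$; you instead invoke the generic inclusion $\mathrm{rowspan}([M]_k)\subseteq\mathrm{rowspan}(M)$, which amounts to the same thing without exploiting that specific coincidence.
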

\begin{proof}
Recall that $A(SA)^\dagger (SA)=AVV^T$, thus the SVD of $AVV^T$ is $U\Sigma V^T$, thus $[AVV^T]_k=U_k\Sigma_k V_k^T$. Since $V_k^TVV^T=V_k^T$ (projecting a subset of $k$ rows of $V^T$ onto the row-space of $V^T$ does not change anything), we have $[AVV^T]_kVV^T = U_k\Sigma_k V_k^TVV^T = U_k\Sigma_k V_k^T =[AVV^T]_k$.
\end{proof}

Proceeding with the proof of \Cref{lmm:scwalt}, we now have for every $Z$ of rank $k$,
\begin{align*}
    \fsnorm{A-[AVV^T]_k} &= \fsnorm{AVV^T-[AVV^T]_kVV^T} + \fsnorm{A(I-VV^T)-[AVV^T]_k(I-VV^T)} \\
    &= \fsnorm{AVV^T-[AVV^T]_k} + \fsnorm{A-AVV^T} \\
    &\leq \fsnorm{AVV^T-ZV^T} + \fsnorm{A-AVV^T} \\
    &= \fsnorm{A-ZV^T},
\end{align*}
where the first and last equalities are the Pythagorean identity (orthogonally projecting onto and against $VV^T$), the second equality is by \Cref{clm:scwalt_aux2}, and the inequality is by the optimality of $[AVV^T]_k$ as a rank-$k$ approximation of $AVV^T$ (since $Z$ has rank $k$). This proves \Cref{eq:scwalt_goal}, proving the lemma.

\subsection{Proof of \Cref{thm:muscomusco}}\label{app:muscomuscopowermethod}
In this section we explain how to read the statement of \Cref{thm:muscomusco} from \cite{musco2015randomized}. All section, theorem and page numbers refer to the arXiv version of their paper.\footnote{\url{https://arxiv.org/abs/1504.05477v4}.}

The relevant result in \cite{musco2015randomized} is the Frobenius-norm analysis of their Algorithm 1 (which they call ``Simultaneous Iteration''), stated in their Theorem 11. As they state in their Section 5.1, for the purpose of low-rank approximation in the Frobenius norm, it suffices to return $\mathrm{\mathbf{Q}}$ instead of $\mathrm{\mathbf{Z}}$ (in the notation of their Algorithm 1). Thus, steps 4--6 of Algorithm 1 can be skipped. Since (again in the notation of their Algorithm 1) $\mathrm{\mathbf{Q}}$ is the result of orthonormalizing the columns of $\mathrm{\mathbf{K}}$, we clearly have $\mathrm{\mathbf{KK}}^\dagger=\mathrm{\mathbf{QQ}}^\dagger$ (this is because $\mathrm{\mathbf{MM}}^\dagger$ is the projection matrix on the column space of any matrix $\mathrm{\mathbf{M}}$, and $\mathrm{\mathbf{K}}$ and $\mathrm{\mathbf{Q}}$ have the same column spaces). Since for our purpose we only need the projection matrix $\mathrm{\mathbf{QQ^\dagger}}$ (rather than the orthonormal basis $\mathrm{\mathbf{Q}}$), we can also skip step 3, and simply use the matrix $\mathrm{\mathbf{K}}$ as the output of their Algorithm 1, while maintaining the guarantee of their Theorem 11 (with $\mathrm{\mathbf{ZZ}}^T$ replaced by $\mathrm{\mathbf{KK}}^\dagger$). 

Next, while their Algorithm 1 chooses the initial matrix $\mathrm{\mathbf{\Pi}}$ as a random gaussian matrix, they state in their Section 5.1 that their analysis (and in particular, their Theorem 11) holds for every initial matrix $\mathrm{\mathbf{\Pi}}$ that satisfies the guarantee of their Lemma 4, which is equivalent to satisfying our \Cref{eq:muscocond}.

Finally, note that they set the number of iterations in step 1 of their Algorithm 1 to $q=O(\epsilon^{-1}\log(d))$, while we set it in \Cref{thm:muscomusco} to $q=O(\epsilon^{-1}\log(d/\epsilon))$. This is because in their setting they may assume w.l.o.g.~that $\epsilon^{-1}=\mathrm{poly}(d)$ (see bottom of their page 10), while in our setting this is not the case.

Altogether, the statement of \Cref{thm:muscomusco} follows.

\subsection{Proof of \Cref{lmm:initialp}}
For completeness, we discuss two ways to prove the lemma.

\paragraph{Proof 1.}
We start by noting that a significantly stronger version of \Cref{lmm:initialp} follows from Theorem 1.3 of \cite{deshpande2006matrix}. Slightly rephrasing (and transposing) their theorem, it states that for every matrix $B\in\R^{n\times d}$, there is a distribution (that depends on $B$) over matrices $P\in\R^{d\times k}$ whose columns are standard basis vectors, such that if we let $\tilde B_k$ be the projection of the columns of $B$ onto the columns-space of $BP$, namely $\tilde B_k=(BP)(BP)^\dagger B$, then we have
\[ \E_P\fsnorm{B-\tilde B_k} \leq (k+1)\cdot\fsnorm{B-[B]_k} . \]

In particular, there exists a supported $P$ that satisfies this equation without the expectation, yielding \Cref{lmm:initialp}.

\emph{Remark.}
Note that this is in fact a quantitatively stronger version of \Cref{lmm:initialp}, since the $O(kd)$ term in \Cref{eq:muscocond} is replaced here by $(k+1)$, which \cite{deshpande2006matrix} furthermore show is the best possible. However, this improvement does not strengthen the final bounds we obtain in our theorems. This is because the analysis of the power method (\Cref{thm:muscomusco}) requires the $\log d$ term in the number of iterations $q$ even if the initial approximation (\Cref{eq:muscocond}) is up to a factor of $(k+1)$ instead of $O(kd)$. Technically, this stems from the analysis in \cite{musco2015randomized}: In the equation immediately after their eq. (4) on page 11, the $\log d$ term is needed not just to eliminate $d$ from the numerator, but also to gain a $d^{O(1)}$ term in the denominator.

\paragraph{Proof 2.}
Since the aforementioned result of \cite{deshpande2006matrix} is difficult and stronger than we require, we now also give a more basic proof of \Cref{lmm:initialp}, for completeness.

Let $B\in\R^{n\times d}$ be written in SVD form as $B=U\Sigma V^T$. 
Let $V_k^T$ be the matrix with the top $k$ rows of $V^T$, and $V^T_{-k}$ the matrix with the remaining rows. 
We use the following fact from \cite{woodruff2014sketching}, which originates in \cite{boutsidis2014near} and is also used in \cite{musco2015randomized} (see their Lemma 14).
\begin{mylemma}\label{lmm:boutsidis}
Let $P\in\R^{d\times k}$ be any matrix such that the $k\times k$ matrix $V_k^TP$ has rank $k$. Then,
\[
  \fsnorm{B-(BP)(BP)^\dagger B} \leq \fsnorm{B-[B]_k} \cdot \left(1 + \norm{V^T_{-k}P}_2^2 \cdot \norm{(V_k^TP)^\dagger}_2^2 \right).
\]
\end{mylemma}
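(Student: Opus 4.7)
The strategy is to note that $(BP)(BP)^\dagger$ is the orthogonal projector onto the column span of $BP$, so the left-hand side equals $\min_{X\in\R^{k\times d}}\fsnorm{B-BPX}$. It therefore suffices to exhibit a single good $X$ and bound $\fsnorm{B-BPX}$ by the right-hand side. The only natural choice that uses the hypothesis that $V_k^TP$ is invertible is
\[
X := (V_k^TP)^{-1}V_k^T = (V_k^TP)^\dagger V_k^T.
\]
The intuition is that $BPX$ is designed to reproduce $[B]_k$ exactly while introducing a controlled ``leakage'' term from the tail singular directions.

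Writing $B=U_k\Sigma_k V_k^T+U_{-k}\Sigma_{-k}V_{-k}^T$ in block SVD form and plugging in this $X$, the $U_k\Sigma_k V_k^T$ component is preserved exactly (because $V_k^TP(V_k^TP)^{-1}=I_k$), and a direct computation yields
\[
B-BPX = U_{-k}\Sigma_{-k}\bigl(V_{-k}^T - E V_k^T\bigr), \qquad E := V_{-k}^TP(V_k^TP)^{-1}.
\]
Since $U_{-k}$ has orthonormal columns, $\fsnorm{U_{-k}M}=\fsnorm{M}$, so I reduce to evaluating $\fsnorm{\Sigma_{-k}(V_{-k}^T - E V_k^T)}$. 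Using the orthonormality relations $V_{-k}^TV_{-k}=I$, $V_k^TV_k=I$ and $V_{-k}^TV_k=0$, the product $(V_{-k}^T - E V_k^T)(V_{-k} - V_k E^T)$ collapses cleanly to $I+EE^T$, giving the Pythagoras-style identity
\[
\fsnorm{B-BPX} \;=\; \fsnorm{\Sigma_{-k}} \;+\; \fsnorm{\Sigma_{-k}E}.
\]
The first summand is exactly $\fsnorm{B-[B]_k}$.

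For the second summand I would apply the submultiplicative inequality $\fsnorm{MN}\leq\fsnorm{M}\norm{N}_2^2$ followed by $\norm{E}_2\leq\norm{V_{-k}^TP}_2\,\norm{(V_k^TP)^\dagger}_2$ (noting $(V_k^TP)^{-1}=(V_k^TP)^\dagger$ because $V_k^TP$ is a full-rank square matrix), obtaining
\[
\fsnorm{\Sigma_{-k}E} \;\leq\; \fsnorm{B-[B]_k}\cdot \norm{V_{-k}^TP}_2^2\,\norm{(V_k^TP)^\dagger}_2^2.
\]
Combining the two displays and invoking the projection optimality $\fsnorm{B-(BP)(BP)^\dagger B}\leq\fsnorm{B-BPX}$ delivers the stated bound. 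The only non-mechanical step is guessing the candidate $X$; after that, the argument is essentially forced and I expect no real obstacle beyond bookkeeping the block structure and verifying the $I+EE^T$ collapse.
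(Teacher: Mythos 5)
Your proof is correct and complete: the reduction to $\min_X\fsnorm{B-BPX}$, the choice $X=(V_k^TP)^{-1}V_k^T$, the exact identity $\fsnorm{B-BPX}=\fsnorm{\Sigma_{-k}}+\fsnorm{\Sigma_{-k}E}$ via the block SVD and orthogonality of $V_k$ and $V_{-k}$, and the final submultiplicativity step all check out. Note that the paper does not prove this lemma at all --- it imports it from prior work (\cite{boutsidis2014near}, \cite{woodruff2014sketching}, and Lemma 14 of \cite{musco2015randomized}) --- and your argument is essentially the standard proof appearing in those references.
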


We will construct a matrix $P\in\R^{d\times k}$ whose columns are distinct standard basis vectors of $\R^d$. Since it would thus have orthonomal columns, as does $V_{-k}$, we would have $\norm{V_{-k}^TP}_2^2\leq \norm{V_{-k}^T}_2^2\cdot \norm{P}_2^2=1$. 
Therefore, using \Cref{lmm:boutsidis}, for $P$ to satisfy \Cref{eq:muscocond} and thus prove \Cref{lmm:initialp}, it suffices to construct it such that $V_k^TP$ has rank $k$ and satisfies $\norm{(V_k^TP)^\dagger}_2^2\leq d$.
This is equivalent to showing that the smallest of the $k$ singular values of $V_k^TP$ is at least $1/\sqrt{d}$, which is what we do in the rest of the current proof.

We construct $P$ by the following process. Initialize $Z_1\in\R^{k\times d}$ as $Z_1\leftarrow V_k^T$, and a zero matrix $P\in\R^{d\times k}$. For $i=1,\ldots,k$:
\begin{enumerate}
    \item Let $z^i_1,\ldots,z^i_d$ denote the columns of $Z_i$.
    \item Let $j_i\leftarrow \mathrm{argmax}_{j\in[d]}\norm{z^i_j}_2^2$. 
    \item Set column $i$ of $P$ to be $e_{j_i}$.
    \item Let $\Pi_i$ be the orthogonal projection matrix against $\mathrm{span}(v_{j_1},\ldots,v_{j_i})$.
    \item $Z_{i+1}\leftarrow \Pi_i V_k^T$.
\end{enumerate}
Let $v_1,\ldots,v_d$ denote the columns of $V_k^T$. Observe that the columns of $V_k^TP$ are $v_{j_1},\ldots,v_{j_k}$. 

\begin{claim}\label{clm:zlargenorm}
For every $i=1,\ldots,k$ we have $\norm{z^i_{j_i}}_2^2\geq \frac{k-i+1}{d}$.
\end{claim}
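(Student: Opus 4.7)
The plan is to combine the orthonormality of the rows of $V_k^T$ with a simple pigeonhole argument applied to the columns after the successive projections.

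First I would establish the key trace identity. Since $V$ is orthogonal, the $k$ rows of $V_k^T \in \R^{k\times d}$ are orthonormal, so $V_k^T V_k = I_k$. For $Z_i = \Pi_{i-1} V_k^T$ with columns $z^i_1,\ldots,z^i_d$ (setting $\Pi_0 = I_k$ for $i=1$), this gives
\[
  \sum_{j=1}^d \|z^i_j\|_2^2 \;=\; \mathrm{tr}(Z_i^T Z_i) \;=\; \mathrm{tr}(\Pi_{i-1} V_k^T V_k \Pi_{i-1}) \;=\; \mathrm{tr}(\Pi_{i-1}^2) \;=\; \mathrm{tr}(\Pi_{i-1}),
\]
using that $\Pi_{i-1}$ is an orthogonal projection, hence idempotent and equal to its own transpose.

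Second, I would argue by induction on $i$ that $v_{j_1},\ldots,v_{j_{i-1}}$ are linearly independent vectors in $\R^k$, so that $\Pi_{i-1}$ projects against an exactly $(i-1)$-dimensional subspace and thus has trace (equal to rank) $k-(i-1) = k-i+1$. The base case $i=1$ is vacuous since $\Pi_0 = I_k$ has trace $k$. Assuming independence through step $i-1$, the trace identity yields $\sum_j \|z^i_j\|_2^2 = k-i+1$, so by averaging $\max_j \|z^i_j\|_2^2 \geq (k-i+1)/d$. For $i \leq k$ this is strictly positive, giving $z^i_{j_i} = \Pi_{i-1} v_{j_i} \neq 0$, which means $v_{j_i} \notin \mathrm{span}(v_{j_1},\ldots,v_{j_{i-1}})$, completing the inductive step.

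Third, the claim itself is then the pigeonhole bound $\|z^i_{j_i}\|_2^2 = \max_j \|z^i_j\|_2^2 \geq (k-i+1)/d$ obtained in the previous step. There is no real obstacle here; the only thing that requires care is threading the linear-independence inductive hypothesis through the trace computation, so that the claimed lower bound on the average (and hence the maximum) is valid at every step.
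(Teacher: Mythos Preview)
Your proof is correct. Both your argument and the paper's reduce to showing $\fsnorm{Z_i}\geq k-i+1$ and then applying an averaging (pigeonhole) bound over the $d$ columns; the difference is only in how this Frobenius-norm bound is obtained.

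You compute $\fsnorm{Z_i}=\mathrm{tr}(\Pi_{i-1})$ exactly via $V_k^TV_k=I_k$, and then thread an induction to ensure that $v_{j_1},\ldots,v_{j_{i-1}}$ are linearly independent, so that $\mathrm{tr}(\Pi_{i-1})=k-(i-1)$. The paper instead observes that $(I-\Pi_{i-1})V_k^T$ has rank at most $i-1$ regardless of whether those vectors are independent, and hence $\fsnorm{Z_i}=\fsnorm{V_k^T-(I-\Pi_{i-1})V_k^T}\geq\fsnorm{V_k^T-[V_k^T]_{i-1}}=k-i+1$ by optimality of $[V_k^T]_{i-1}$ and the fact that all $k$ singular values of $V_k^T$ equal $1$. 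The paper's route avoids the induction entirely and needs only the inequality; your route is slightly longer but yields the exact value $\fsnorm{Z_i}=k-i+1$ and, as a bonus, proves the linear independence of $v_{j_1},\ldots,v_{j_k}$ along the way (something the paper establishes separately in later claims).
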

\begin{proof}
Let $i\in[k]$. Since $V_k^T$ has orthonormal rows, each of its $k$ singular values equals $1$, and any best rank-$(i-1)$ approximation of it, $[V_k^T]_{i-1}$, satisfies $\fsnorm{V_k^T-[V_k^T]_{i-1}} = k-(i-1)$. Since $\Pi_{i-1}$ is a projection against $i-1$ directions, $(I-\Pi_{i-1})V_k^T$ can be viewed as a rank-$(i-1)$ approximation of $V_k^T$, and therefore
\[ \fsnorm{Z_i} = \fsnorm{\Pi_{i-1}V_k^T} = \fsnorm{V_k^T - (I-\Pi_{i-1})V_k^T} \geq \fsnorm{V_k^T-[V_k^T]_{i-1}} = k-i+1. \]
Hence the average squared-$\ell_2$ mass of columns in $Z_i$ is at least $\frac{k-i+1}{d}$, and hence the column with maximal $\ell_2$-norm --- which we recall is defined to be $z_{j_i}^i$ --- has squared $\ell_2$-norm at least $\frac{k-i+1}{d}$.
\end{proof}

\begin{claim}\label{clm:zspanned}
For every $i=1,\ldots,k$, $z^i_{j_i}$ is spanned by $v_{j_1},\ldots,v_{j_i}$.
\end{claim}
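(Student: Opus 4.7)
The plan is to prove the claim directly from the definitions, without induction machinery. Recall the recursive construction: $Z_i = \Pi_{i-1} V_k^T$, where by convention $\Pi_0 = I$, and $\Pi_{i-1}$ is the orthogonal projection \emph{against} $\mathrm{span}(v_{j_1},\ldots,v_{j_{i-1}})$. Since the columns of $V_k^T$ are $v_1,\ldots,v_d$, the columns of $Z_i$ are $z^i_j = \Pi_{i-1} v_j$; in particular, $z^i_{j_i} = \Pi_{i-1} v_{j_i}$.

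The crucial observation is the orthogonal decomposition
\[
  z^i_{j_i} \;=\; \Pi_{i-1} v_{j_i} \;=\; v_{j_i} \;-\; (I - \Pi_{i-1}) v_{j_i}.
\]
By definition, $(I - \Pi_{i-1})$ is the orthogonal projection \emph{onto} $\mathrm{span}(v_{j_1},\ldots,v_{j_{i-1}})$, so the subtracted term lies in $\mathrm{span}(v_{j_1},\ldots,v_{j_{i-1}})$. Adding $v_{j_i}$ back in shows $z^i_{j_i} \in \mathrm{span}(v_{j_1},\ldots,v_{j_i})$, which is the claim.

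I do not expect any real obstacle here; the statement is essentially just unpacking the definition of $\Pi_{i-1}$ as a projection against a span built from previously selected columns of $V_k^T$. The only care one must take is in the base case $i=1$, where $\Pi_0 = I$ and $z^1_{j_1} = v_{j_1}$ is trivially in $\mathrm{span}(v_{j_1})$; the identity above still goes through, with the subtracted term equal to zero. Consequently the two-line argument suffices, and together with Claim \ref{clm:zlargenorm} it will be used to lower bound the smallest singular value of $V_k^T P$ by $1/\sqrt d$, completing Proof 2 of Lemma \ref{lmm:initialp}.
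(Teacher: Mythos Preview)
Your proof is correct and essentially identical to the paper's: both observe that $z^i_{j_i}=\Pi_{i-1}v_{j_i}=v_{j_i}-(I-\Pi_{i-1})v_{j_i}$ and conclude by noting that $I-\Pi_{i-1}$ projects onto $\mathrm{span}(v_{j_1},\ldots,v_{j_{i-1}})$.
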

\begin{proof}
Observe that $z^i_{j_i}=\Pi_{i-1}v_{j_i}$ (with the convention that $\Pi_0$ is the identity). Therefore, $z^i_{j_i}=v_{j_i}-(I-\Pi_{i-1})v_{j_i}$, and the claim follows by noting that $I-\Pi_{i-1}$ is the orthogonal projection onto the subspace spanned by $v_{j_1},\ldots,v_{j_{i-1}}$.
\end{proof}

\begin{claim}\label{cor:zortho}
$z^1_{j_1},\ldots,z^k_{j_k}$ are pairwise orthogonal.
\end{claim}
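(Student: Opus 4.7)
The plan is to prove orthogonality by combining two facts: (i) $z^i_{j_i}$ lies in the image of $\Pi_{i-1}$, which by construction is orthogonal to $\operatorname{span}(v_{j_1},\ldots,v_{j_{i-1}})$; and (ii) for any $i' < i$, $z^{i'}_{j_{i'}}$ lies inside $\operatorname{span}(v_{j_1},\ldots,v_{j_{i'}})$, as established by \Cref{clm:zspanned}. Putting these together gives $z^{i'}_{j_{i'}} \in \operatorname{span}(v_{j_1},\ldots,v_{j_{i-1}})$, and therefore $\langle z^i_{j_i}, z^{i'}_{j_{i'}}\rangle = 0$.

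More concretely, I would first recall that $z^i_{j_i} = \Pi_{i-1} v_{j_i}$, as noted in the proof of \Cref{clm:zspanned}. Since $\Pi_{i-1}$ is defined in step~4 of the construction as the orthogonal projection \emph{against} $\operatorname{span}(v_{j_1},\ldots,v_{j_{i-1}})$, its image is the orthogonal complement of that subspace. Hence $z^i_{j_i}$ is orthogonal to each $v_{j_t}$ for $t = 1, \ldots, i-1$, and by linearity to every vector in their span.

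Next, for any fixed pair $i' < i$, \Cref{clm:zspanned} applied at index $i'$ shows that $z^{i'}_{j_{i'}}$ lies in $\operatorname{span}(v_{j_1},\ldots,v_{j_{i'}})$, which is contained in $\operatorname{span}(v_{j_1},\ldots,v_{j_{i-1}})$ since $i' \le i-1$. Combining with the previous paragraph yields $\langle z^i_{j_i}, z^{i'}_{j_{i'}}\rangle = 0$, establishing the claim.

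I do not expect any real obstacle here: the claim is essentially a bookkeeping consequence of how the Gram--Schmidt-style projections $\Pi_i$ are defined together with \Cref{clm:zspanned}, and the only care needed is to make sure the index ranges match (projecting \emph{against}, not onto, the previously selected directions). No new estimates, norms, or rank arguments are required.
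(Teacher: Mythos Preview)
Your proposal is correct and follows essentially the same argument as the paper: one of the two vectors lies in $\operatorname{span}(v_{j_1},\ldots,v_{j_{\min}})$ by \Cref{clm:zspanned}, while the other is the image of a projection against a superspace of this, so they are orthogonal. The only cosmetic difference is that you take $i'<i$ where the paper takes $i<i'$.
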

\begin{proof}
Let $i<i'$. By \Cref{clm:zspanned} $z^i_{j_i}$ is spanned by $v_{j_1},\ldots,v_{j_i}$. At the same time, $z^{i'}_{j_{i'}}$ is a column of the matrix $Z_{i'}$, whose columns have been orthogonally projected against $v_{j_1},\ldots,v_{j_{i'-1}}$, a set that contains $v_{j_1},\ldots,v_{j_i}$. 
Thus $z^{i'}_{j_{i'}}$ is orthogonal to $z^i_{j_i}$.
\end{proof}

\begin{claim}\label{cor:zqr}
Let $i\in[k]$. Then $v_{j_i}$ can be written uniquely as a linear combination of $z_{j_1}^1,\ldots,z_{j_i}^i$, such that the coefficient of $z_{j_i}^i$ is $1$.
\end{claim}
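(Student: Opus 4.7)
The plan is to prove the claim by induction on $i$, using the identity $z^i_{j_i} = \Pi_{i-1} v_{j_i}$ (with the convention $\Pi_0 = I$) that was already noted in the proof of \Cref{clm:zspanned}. Since $\Pi_{i-1}$ is an orthogonal projection, we have the decomposition
\[
  v_{j_i} = \Pi_{i-1} v_{j_i} + (I - \Pi_{i-1}) v_{j_i} = z^i_{j_i} + (I - \Pi_{i-1}) v_{j_i},
\]
and the second term lies in the image of $I - \Pi_{i-1}$, which is $\mathrm{span}(v_{j_1}, \ldots, v_{j_{i-1}})$. By the inductive hypothesis, each $v_{j_\ell}$ for $\ell < i$ can be written as a linear combination of $z^1_{j_1}, \ldots, z^\ell_{j_\ell}$. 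Substituting these expressions into $(I - \Pi_{i-1}) v_{j_i}$ yields $v_{j_i}$ as a linear combination of $z^1_{j_1}, \ldots, z^i_{j_i}$ in which the coefficient of $z^i_{j_i}$ is exactly $1$, handling existence.

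For uniqueness, I would invoke \Cref{cor:zortho} to conclude that $z^1_{j_1}, \ldots, z^i_{j_i}$ are pairwise orthogonal, and \Cref{clm:zlargenorm} to conclude that each of them is nonzero (since $\|z^\ell_{j_\ell}\|_2^2 \geq (k-\ell+1)/d > 0$ whenever $\ell \leq k$). Hence they are linearly independent, so any representation of a vector as a linear combination of them is unique; in particular, the coefficient of $z^i_{j_i}$ in the representation of $v_{j_i}$ must be the value $1$ established above.

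I do not anticipate any real obstacle here: the claim is essentially a bookkeeping consequence of the Gram--Schmidt-like construction of the $z^\ell_{j_\ell}$, and the prior claims \Cref{clm:zspanned,cor:zortho,clm:zlargenorm} have already packaged the ingredients (inclusion of spans, orthogonality, and nonvanishing) that the induction needs.
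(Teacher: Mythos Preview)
Your proposal is correct and follows essentially the same route as the paper: both use the decomposition $v_{j_i} = z^i_{j_i} + (I-\Pi_{i-1})v_{j_i}$, show the second term lies in $\mathrm{span}(z^1_{j_1},\ldots,z^{i-1}_{j_{i-1}})$, and derive uniqueness from \Cref{cor:zortho} and \Cref{clm:zlargenorm}. The only minor difference is that you obtain the span containment by induction on $i$, whereas the paper instead invokes \Cref{clm:zspanned} to place $z^1_{j_1},\ldots,z^{i-1}_{j_{i-1}}$ inside $W=\mathrm{span}(v_{j_1},\ldots,v_{j_{i-1}})$ and then uses a dimension count (since $\dim W \le i-1$ and the $z$'s are independent) to conclude equality of the two spans.
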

\begin{proof}
We have shown in \Cref{cor:zortho} that $z^1_{j_1},\ldots,z^k_{j_k}$ are orthogonal and in \Cref{clm:zlargenorm} that each is non-zero, so they form a basis of $\R^k$. Thus $v_{j_i}$ is written uniquely as a linear combination of them. 
As noted in the proof of \Cref{clm:zspanned}, we have $z^i_{j_i}=\Pi_{i-1}v_{j_i}=v_{j_i}-(I-\Pi_{i-1})v_{j_i}$, or equivalently $v_{j_i}=z^i_{j_i}+(I-\Pi_{i-1})v_{j_i}$. 

We recall that $(I-\Pi_{i-1})$ is the orthogonal projection onto the subspace $W=\mathrm{span}(v_{j_1},\ldots,v_{j_{i-1}})$, whose dimension is at most $i-1$. 
By \Cref{clm:zspanned}, $W$ contains $z_{j_1},\ldots,z_{j_{i-1}}$. 
Since $z_{j_1},\ldots,z_{j_k}$ form a basis of $\R^k$, we now get that $W$ is spanned by $z_{j_1},\ldots,z_{j_{i-1}}$, and cannot contain any $z_{j_{i'}}$ with $i'\geq i$. In particular, $(I-\Pi_{i-1})v_{j_i}$ is written uniquely as a linear combination of $z_{j_1},\ldots,z_{j_{i-1}}$. Recalling that $v_{j_i}=z^i_{j_i}+(I-\Pi_{i-1})v_{j_i}$, the claim follows.
\end{proof}

We can finally complete the proof of \Cref{lmm:initialp}. 
For every $i\in[k]$, let $q_i$ denote the unit-length vector in the direction of $z^i_{j_i}$, i.e., $q_i=\frac{1}{\norm{z^i_{j_i}}}z^i_{j_i}$. 
Let $Q\in\R^{k\times k}$ be the matrix whose columns are $q_1,\ldots,q_k$. \Cref{cor:zqr} means that we can write $V_k^TP$ (since its columns, we recall, are $v_{j_1},\ldots,v_{j_k}$) as $V_k^TP=QR$, where $R$ is an upper triangular matrix, whose diagonal entries are $\norm{z^i_{j_i}}$ for $i=1,\ldots,k$. 
Since $Q$ has orthonormal columns (from \Cref{cor:zortho} and the fact that we normalized its columns), this is a QR-decomposition of $V_k^TP$. Hence, $V_k^TP$ and $R$ have the same singular values. 
The diagonal entries of $R$ are its eigenvalues, which are also its singular values as all are non-negative. By \Cref{clm:zlargenorm}, each diagonal entry of $R$ is at least $1/\sqrt{d}$. 
This implies the same lower bound on the smallest singular value of $R$ and thus of $V_k^TP$, which as explained earlier, implies \Cref{lmm:initialp}.

\section{Proof of the Lower Bound}\label{app:lb}
In this section we prove the lower bound in \Cref{thm:main}, restated next. 
\begin{theorem}
For every $s\leq k\leq m$ and $\epsilon\in(0,\tfrac1{2\sqrt k})$, the $\epsilon$-fat shattering dimension of IVY with target low rank $k$, sketching dimension $m$ and sparsity $s$ is $\Omega(ns)$.
\end{theorem}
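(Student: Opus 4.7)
The plan is to generalize the $\Omega(n)$ lower bound from the warm-up (\Cref{sec:mk1}) to $\Omega(ns)$, by constructing $N=ns$ rank-$k$ matrices whose SCW losses, as $S$ ranges over $\mathcal S$, can be independently steered to either of two well-separated values. Let $\sigma\subseteq[m]\times[n]$ denote the fixed sparsity pattern of $S$ (the $ns$ positions on which $S$ has free parameters). The analytic starting point is \Cref{lmm:scwalt}: the SCW output is $[A(SA)^\dagger(SA)]_k$, which for a rank-$k$ matrix $A$ reduces to $A\Pi$, where $\Pi$ is the orthogonal projection onto the row-space of $SA$. In particular, for the scaled coordinate identity $A=\tfrac{1}{\sqrt k}\sum_{i=1}^k e^{(n)}_{c_i}(e^{(d)}_i)^T$ supported on a $k$-subset $T=\{c_1,\ldots,c_k\}\subseteq[n]$ (so $\|A\|_F^2=1$), a short computation using $\|A-A\Pi\|_F^2=\mathrm{tr}(A^TA(I-\Pi))$ gives $L^{\mathrm{SCW}}_k(S,A)=(k-\mathrm{rank}(S[:,T]))/k$, reducing the loss to a question about ranks of $m\times k$ submatrices of $S$.

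I would proceed in three steps. \textbf{Step 1 (per-parameter support).} For each $(r,c)\in\sigma$, choose a $k$-subset $T_{r,c}\subseteq[n]$ containing $c$ such that the determinant of a $k\times k$ block of $S[:,T_{r,c}]$ is a non-trivial affine function of $S_{r,c}$ when all other free entries of $S$ are held fixed; with the other entries set appropriately, $S[:,T_{r,c}]$ is full-rank iff $S_{r,c}$ avoids one specific value. The $ns$ supports $\{T_{r,c}\}$ must be chosen compatibly, so that the $ns$ determinant conditions are simultaneously and independently controllable by the $ns$ free entries of $S$. \textbf{Step 2 (gap amplification).} A single rank drop yields only a $1/k$ loss gap, which would force $\epsilon<1/(2k)$ --- weaker than the claim. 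To obtain a gap of $1/\sqrt k$, replace the scaled identity on $T_{r,c}$ by a matrix on the same support whose singular values are reweighted so that a rank defect in the $S_{r,c}$-controlled mode contributes $\geq 1/\sqrt k$ to the loss (for instance, allocate squared mass $1/\sqrt k$ to that mode and distribute the remaining $1-1/\sqrt k$ across the other $k-1$ modes, and verify via the projection formula above that the loss jumps by at least $1/\sqrt k$). \textbf{Step 3 (encoder).} Given $I\subseteq\sigma$, set the free entries of $S_I\in\mathcal S$ so that, for each $(r,c)\in\sigma$, the critical determinant from Step 1 is non-zero iff $(r,c)\in I$. Placing each threshold $r_{r,c}$ halfway between the two loss values then gives $\epsilon$-fat shattering for every $\epsilon<1/(2\sqrt k)$, proving $\mathrm{fatdim}_\epsilon(\mathcal L_\mathrm{IVY})\geq|\sigma|=ns$.

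The main obstacle is Step 1: exhibiting supports $T_{r,c}$ so that the $ns$ rank conditions are independently addressable by the $ns$ free entries of $S$, for any admissible $\sigma$. A natural route is to partition $\sigma$ into local groups --- for example, collections of columns whose non-zero rows lie inside a common size-$k$ subset of $[m]$, which is realizable because $s\leq k\leq m$ --- and construct the $T_{r,c}$ blockwise via a Hall-type matching argument, after which the critical determinants decouple across groups. An alternative is to first prove the lower bound for a structured (say block-diagonal) sparsity pattern, where the warm-up argument generalizes position-by-position, and then argue that the fat shattering dimension depends on $\sigma$ only through $n$ and $s$ via a coordinate relabeling of rows and columns of $S$. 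Once Step 1 is in place, Steps 2 and 3 are routine: Step 2 is a singular-value reweighting on a fixed support, and Step 3 is the same subset-by-subset bookkeeping as in \Cref{sec:mk1}.
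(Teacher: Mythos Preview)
Your block-diagonal ``alternative'' is precisely the paper's route. The paper first treats the dense case $s=k$: it takes $A_{(i,t)}$ to be the $n\times k$ matrix obtained from $A_0=[e_1\,\cdots\,e_k]$ by replacing column $i$ with $e_t$ (for $i\in[k]$, $t\in\{k+1,\ldots,n\}$), and for a prescribed subset $Z'$ builds $S$ with the first $k$ columns equal to $I_k$ and $S_{i,t}=1$ iff $A_{(i,t)}\in Z'$; then $SA_{(i,t)}$ has full rank iff $A_{(i,t)}\in Z'$, giving $(n-k)k$ shattered points. For general $s\le k$ it partitions $A$ and $S$ into $k/s$ diagonal blocks of sizes $(ns/k)\times s$ and $s\times(ns/k)$ and reruns the dense argument inside each block. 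Crucially, the paper does not attempt an arbitrary pattern $\sigma$: it simply exhibits one admissible pattern (this block-diagonal one) for which the bound holds. Your ``coordinate relabeling'' reduction is therefore unnecessary, and also wrong as stated---two $s$-per-column patterns with the same $(m,n,s)$ need not be row/column permutations of one another. Drop the arbitrary-$\sigma$ Step~1 and go straight to the block construction.

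Your Step~2 is not in the paper, and you have in fact caught a slip there: the paper's matrices have all $k$ singular values equal, so a single rank drop costs $1$ in squared Frobenius norm, which after normalizing $\|A\|_F^2=k$ down to $1$ becomes $1/k$, not the $1/\sqrt k$ the paper asserts. Your reweighting idea is the natural repair, but as written it is too quick in the dense case: when $S_{i,t}=0$, the direction annihilated by $I-\Pi$ is $v\propto e_i-\sum_{j\ne i}S_{j,t}\,e_j$, which depends on the \emph{other} free entries of $S$, so loading mass onto ``the $i$th mode'' does not guarantee that is the mode you lose. In the block-diagonal construction the reweighting is cleaner, since the missed direction is confined to the critical block and you can put the heavy mass on that block; working out exactly what gap this yields (and whether it reaches $1/\sqrt k$ for all $s\le k$) is the real content hiding behind your ``verify via the projection formula''. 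If you are content with the range $\epsilon<1/(2k)$, no reweighting is needed and your argument coincides with what the paper's construction actually proves.
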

\begin{proof}
We may assume w.l.o.g.~that $m=k$, since we can always augment a sketching matrix with $k$ rows with $m-k$ additional zero rows, without changing the result of SCW.

We start with the special case $s=k$, where the sketching matrix $S$ is allowed to be fully dense, and the desired lower bound is $\Omega(nk)$. Let $A_0$ be the $n\times k$ matrix whose $i$-th column is $e_i$ (the standard basis vector) for all $i=1\ldots k$. 
For every $i\in\{1\ldots k\}$ and $t\in \{k+1\ldots n\}$, Let $A_{(i,t)}$ be given by replacing the $i$-th column of $A_0$ with $e_t$. Observe that each $A_{(i,t)}$ has rank $k$, and each of its singular values equals $1$. We will show that the set of matrices $Z=\{A_{(i,t)}:i=1\ldots k,\; t=k+1\ldots n\}$ is shattered, which would imply the lower bound since its size is $k(n-k)=\Omega(nk)$.

Let $Z'\subset Z$. It suffices to exhibit a sketching matrix $S\in\R^{k\times n}$ (with unbounded sparsity, since $s=k$) such that for every $A_{(i,t)}$, the SCW loss of using $S$ for a rank-$k$ approximation of $A$ is $0$ if $A_{(i,t)}\in Z'$, and at least $1$ otherwise. 
We set the first $k$ columns of $S$ to be the order-$k$ identity matrix. Then, for every $t=k+1,\ldots ,n$, let $J_t = \{i\in {1\ldots k} : A_{(i,t)}\in Z'\}$. In the $t$-th column of $S$, we put $1$'s in rows $J_t$ and $0$'s in the remaining rows. 

Fix $A_{(i,t)}$. Recall that SCW returns the best rank-$k$ approximation of $A$ in the row-space of $SA_{(i,t)}$, which is a $k \times k$ matrix.

\begin{itemize}
    \item Suppose $A_{(i,t)}\notin Z'$. 
In this case, we argue that the $i$-th row of $SA_{(i,t)}$ is zero. This implies that the row-space of $SA_{(i,t)}$ has dimension at most $k-1$, so SCW incurs loss at least 1.

Indeed, row $i$ of $SA_{(i,t)}$ equals the sum of rows $j$ of $A_{(i,t)}$ for which $S(i,j)=1$. The only non-zero rows in $A_{(i,t)}$, by construction, are rows $\{1...k\}\setminus\{i\}$ and row $t$. Since the first $k$ columns of $S$ are the identity, $S(i,j)=0$ for every $j\in\{1...k\}\setminus\{i\}$. Since $i\notin  J_t$, then by construction of $S$ we have $S(i,t)=0$. Overall, the $i$-th row of $SA_{(i,t)}$ is zero.

\item Suppose $A_{(i,t)}\in Z'$. In this case, we argue that the row-space of $SA_{(i,t)}$ has dimension $k$. Since the ambient row-dimension is also $k$, this means SCW returns a perfect rank-$k$ approximation, i.e., zero loss.

Indeed, again, the only non-zero rows of $A_{(i,t)}$ are rows $\{1...k\}\setminus\{i\}$ and row $t$. But in this case $i\in J_t$, thus $S(i,t)=1$ and thus row $i$ of $SA_{(i,t)}$ equals $e_i$. For every $j\in\{1...k\}\setminus\{i\}$, row $j$ of $SA_{(i,t)}$ equals either $e_j+e_i$ (if $j\in J_t$) or just $e_j$ (if $j\notin J_t$). It is thus clear that the rows of $SA_{(i,t)}$ span all of $\R^k$. 
\end{itemize}
In conclusion, the SCW loss is $0$ on matrices in $Z'$ at least $1$ on matrices in $Z\setminus Z'$. 

Next we extend it to an $\Omega(ns)$ lower bound for any $s=1,\ldots,k$. We assume for simplicity that $s$ is an integer divisor of $k$ and that $k$ is an integer divisor of $ns$.

We partition an input matrix $A\in\R^{n\times k}$ into $k/s$ diagonal blocks of order $(ns/k)\times s$ each (everything outside the diagonal blocks is zero). To construct the shattered set, we first set each diagonal block in $A$ to $A_0\in\R^{(ns/k)\times s}$ (as defined above), then choose one ``critical'' block $b\in\{1,\ldots,k\}$ and replace its $A_0$ with $A_{(i,t)}$ as defined above, with $i\in\{1,\ldots,s\}$ and $t\in\{s+1,\ldots,ns/k\}$. Denote the resulting matrix by $A_{(b,i,t)}$. The total size of the shattered set $\{A_{(b,i,t)}\}$ is $\tfrac{k}{s}\cdot s\cdot(\tfrac{ns}{k}-s)=(n-k)s=\Omega(ns)$. The corresponding sketching matrix, arising from the dense construction above, has block-diagonal structure with blocks of order $s\times (ns/k)$ each, and in particular has at most $s$ nonzeros per column, as needed.

The correctness of the construction follow immediately from the dense case. In more detail, let $Z''$ be a subset of the shattered set $\{A_{(b,i,t)}\}$. Fix $A_{(b,i,t)}$ (not necessarily in $Z''$). The $k\times k$ matrix $SA_{(b,i,t)}$ has block-diagonal structure with blocks of order $s\times s$. Each non-critical block equals the order-$s$ identity, while the critical block, by the proof of the dense case above, equals the order-$s$ identity if $A_{(b,i,t)}\in Z''$ and has rank at most $s-1$ otherwise. Therefore, if $A_{(b,i,t)}\in Z''$ then SCW with sketching matrix $S$ finds a zero-loss rank-$k$ approximation of every $A_{(b,i,t)}\in Z''$, and incurs loss at least $1$ for every $A_{(b,i,t)}\notin Z''$. 

Finally, recall that we need to normalize our matrices to have squared Frobenius norm $1$. In the above construction it is instead $k$, so we need to divide each entry by $1/\sqrt k$. This means that the loss is zero for matrices in the shattered set, and is at least $1/\sqrt k$ for matrices outside the shattered set. Thus, the set is $\epsilon$-fat shattered for every $\epsilon\in(0,\tfrac1{2\sqrt k})$.
\end{proof}

\section{Proof of \Cref{thm:samples}}\label{app:samples}
\subsection{Uniform Convergence and ERM Upper Bound}
By classical results in learning theory, the fat shattering dimension can be used to obtain an upper bound on the number of samples needed for uniform convergence and for ERM learning (see for example Theorem 19.1 in \cite{anthony2009neural}), and we could simply plug \Cref{thm:main} into these results. 
However, we can get a sharper bound by exploiting the proxy loss from \Cref{sec:proxy}, since for the latter we have an upper bound on the pseudo-dimension, which yields sharper bounds for ERM learning than the fat shattering dimension. 


Let $\ell_{\mathrm{IVY}}=\ell_{\mathrm{IVY}}(\epsilon,\delta)$ be the number of samples required for $(\epsilon,\delta)$-uniform convergence for the family IVY losses, $\mathcal L_{\mathrm{IVY}}=\{L_k^{\mathrm{SCW}}(S,\cdot):\mathcal A\rightarrow[0,1]\}_{S\in\mathcal S}$. 
Let $\hat\ell=\hat\ell(\epsilon,\delta)$ be the number of samples required for $(\epsilon,\delta)$-uniform convergence for the family of proxy losses, $\hat{\mathcal L}_{\epsilon}=\{\hat L_{k,\epsilon}(S,\cdot):\mathcal A\rightarrow[0,1]\}_{S\in\mathcal S}$. 
Fix $\hat\ell$ matrices $A_1,\ldots,A_{\hat\ell}\in\mathcal A$. \Cref{clm:correctness} implies
\[ \forall \; S\in\mathcal S, \;\;\;\; \left| \frac{1}{\hat\ell}\sum_{i=1}^{\hat\ell}\hat L_{k,\epsilon}(S,A_i) - \frac{1}{\hat\ell}\sum_{i=1}^{\hat\ell}L_k^{\mathrm{SCW}}(S,A_i) \right| \leq \epsilon \]
and 
\[ \forall \; S\in\mathcal S, \;\;\;\; \left| \E_{A\sim\mathcal D}[\hat L_{k,\epsilon}(S,A)] - \E_{A\sim\mathcal D}[L_k^{\mathrm{SCW}}(S,A)] \right| \leq \epsilon . \]
Let $\mathcal D$ be a distribution over $\mathcal A$. 
By definition of $\hat\ell$, with probability at least $1-\delta$ over sampling $A_1,\ldots,A_{\hat\ell}$ independently from $\mathcal D$, we have 
\[
\forall \; S\in\mathcal S, \;\;\;\; \left| \frac{1}{\hat\ell}\sum_{i=1}^{\hat\ell}\hat L_{k,\epsilon}(S,A_i) - \E_{A\sim\mathcal D}[\hat L_{k,\epsilon}(S,A)]\right| \leq \epsilon ,
\]
which combined with the above, implies
\[ \forall \; S\in\mathcal S, \;\;\;\; \left| \frac{1}{\hat\ell}\sum_{i=1}^{\hat\ell}L_k^{\mathrm{SCW}}(S,A_i) - \E_{A\sim\mathcal D}[L_k^{\mathrm{SCW}}(S,A)]\right| \leq 3\epsilon  . \]
Consequently, $\ell_{\mathrm{IVY}}(3\epsilon,\delta)\leq\hat\ell(\epsilon,\delta)$. 
By classical results in learning theory (see, e.g., Theorem 3.2 in \cite{gupta2017pac}), the sample complexity can be upper bounded using the pseudo-dimension, and in particular, $\hat\ell(\epsilon,\delta) =  O(\epsilon^{-2}\cdot(\mathrm{pdim}(\hat{\mathcal L}_{\epsilon})+\log(1/\delta)))$. 
By \Cref{eq:pdimproxy} in \Cref{sec:proxy}, $\mathrm{pdim}(\hat{\mathcal L}_{\epsilon})=O(ns\cdot (m + k\log (d/k) + \log(1/\epsilon)))$. 
Putting everything together, 
\[ \ell_{\mathrm{IVY}}(3\epsilon,\delta) = O(\epsilon^{-2}\cdot (ns\cdot (m + k\log (d/k) + \log(1/\epsilon)) + \log(1/\delta))) . \]
As a consequence, that many samples suffice for $(6\epsilon,\delta)$-learning IVY with ERM. The upper bound in \Cref{thm:samples} follows by scaling $\epsilon$ down by a constant.

\subsection{Uniform Convergence Lower Bound}
We proceed to the lower bound on the number of samples needed for IVY to admit $(\epsilon,\epsilon)$-uniform convergence. 
It relies on some known results about the connection between uniform convergence and the fat shattering dimension, which we now detail. 

We introduce some notation. 
Let $\mathcal L$ be a class of functions $L:\mathcal X\rightarrow[0,1]$, and let $\mathcal D$ be a distribution over $\mathcal X$. For every $L\in\mathcal L$, denote its expected loss over $\mathcal D$ by
\[ z(L):=\E_{x\sim\mathcal D}[L(x)] . \]
Given $\ell$ i.i.d.~samples $(x_1,\ldots,x_\ell)\sim\mathcal D^\ell$, denote the empirical loss over the samples by
\[ \hat z_\ell(L):=\frac1\ell\sum_{i=1}^\ell L(x_i) . \]
Suppose $\mathcal L$ admits $(\epsilon,\epsilon)$-uniform convergence with $\ell$ samples. This can now be written as
\begin{equation}\label{eq:epsepsuni}
  \Pr_{\mathcal D^\ell}\left[\sup_{L\in\mathcal L}\left|\hat{z}_\ell(L) - z(L)\right|\leq\epsilon\right]\geq1-\epsilon .
\end{equation}
Our goal for this section is to prove a lower bound on $\ell$. We begin the proof. 
\Cref{eq:epsepsuni} implies 
\begin{equation}\label{eq:expuni}
  \E_{\mathcal D^\ell}\left[\sup_{L\in\mathcal L}\left|\hat{z}_\ell(L) - z(L)\right|\right]\leq 2\epsilon . 
\end{equation}
This expectation can be bounded using Rademacher averages. Define the centered class $\mathcal L_c$ of $\mathcal L$ as
\[ \mathcal L_c := \{L-z(L) : L\in\mathcal L\} . \]
For the given sample $(x_1,\ldots,x_\ell)$, define the Rademacher average of $\mathcal L_c$ as
\[
  \mathrm{Rad}_\ell(\mathcal L_c) := \E_{\sigma_1,\ldots,\sigma_\ell}\left[\sup_{L'\in\mathcal L_c}\left|\frac1\ell\sum_{i=1}^\ell\sigma_i L'(x_i)\right|\right] = \E_{\sigma_1,\ldots,\sigma_\ell}\left[\sup_{L\in\mathcal L}\left|\frac1\ell\sum_{i=1}^\ell\sigma_i\left(L(x_i) - z(L)\right)\right|\right] ,
\]
where each of $\sigma_1,\ldots,\sigma_\ell$ is independently uniform in $\{1,-1\}$. 
The desymmetrization inequality for Rademacher processes (see \cite{koltchinskii2006local}) states that
\begin{equation}\label{eq:desymm}
  \E_{\mathcal D^\ell}\left[\sup_{L\in\mathcal L}\left|\hat{z}_\ell(L) - z(L)\right|\right]
  \geq \frac12\E_{\mathcal D^\ell} \left[\mathrm{Rad}_\ell(\mathcal L_c)\right] .
\end{equation}
A version of Sudakov's minorization inequality for Rademacher processes (Equation (26) in \cite{steinwart2015measuring}, based on Corollary 4.14 in \cite{ledoux2013probability}) states that
\begin{equation}\label{eq:sudakov}
  \mathrm{Rad}_\ell(\mathcal L_c)
  \geq  \frac{\alpha_1}{\sqrt\ell} \cdot \left(\ln\left(2+\frac{\alpha_2}{\Psi_\ell(\mathcal L_c)}\right)\right)^{-1/2} \cdot \sup_{\gamma>0}\gamma \sqrt{\ln\mathcal N_2(\gamma,\mathcal L,\ell)} ,
\end{equation}
where $\alpha_1,\alpha_2>0$ are absolute constants, $\mathcal N_2(\epsilon,\mathcal L,\ell)$ is the \emph{covering number}, and
\[
  \Psi_\ell(\mathcal L_c) := \sup_{L'\in\mathcal L_c}\sqrt{\frac1\ell\sum_{i=1}^\ell(L'(x_i))^2}.
\]
We forgo the definition of the covering number, since we only need the standard fact that it can be lower-bounded in terms of the fat shattering dimension. It is encompassed in the following claim.
\begin{claim}\label{clm:coveringnumber}
Let $\gamma\in[4\epsilon,1)$ and $\delta\in(0,\tfrac1{100})$. Suppose $\mathcal L$ admits $(\epsilon,\delta)$-uniform convergence with $\ell$ samples, and $\mathrm{fatdim}_{16\gamma}(\mathcal L)>1$. Then $\ln\mathcal N_2(\gamma,\mathcal L,\ell)\geq\tfrac18\cdot\mathrm{fatdim}_{16\gamma}(\mathcal L)$.
\end{claim}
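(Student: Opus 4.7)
The plan is to construct a $2\gamma$-packing of $\mathcal L$ in the empirical $L_2$ metric of size at least $2^{N/8}$, where $N=\mathrm{fatdim}_{16\gamma}(\mathcal L)$, which immediately gives $\mathcal N_2(\gamma,\mathcal L,\ell)\geq 2^{N/8}$ and hence the claim. Let $Y=\{y_1,\ldots,y_N\}$ be a $16\gamma$-fat-shattered set with thresholds $r_1,\ldots,r_N$, and for each $I\subseteq[N]$ let $L_I\in\mathcal L$ be the promised witness, so that $|L_I(y_i)-L_J(y_i)|>32\gamma$ whenever $i\in I\triangle J$. First, by a Gilbert--Varshamov-style argument (or the probabilistic method applied to a random collection of subsets of $[N]$), there exists a family $\mathcal I\subseteq 2^{[N]}$ with $|\mathcal I|\geq 2^{N/8}$ such that $|I\triangle J|\geq N/4$ for all distinct $I,J\in\mathcal I$.

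Next, take $\mathcal D=\mathrm{Unif}(Y)$. For distinct $I,J\in\mathcal I$,
\[
  \|L_I-L_J\|_{L_2(\mathcal D)}^2 = \frac{1}{N}\sum_{i=1}^N(L_I(y_i)-L_J(y_i))^2 \geq \frac{|I\triangle J|}{N}(32\gamma)^2 \geq 256\gamma^2.
\]
I would then transfer this separation to empirical $L_2$ on a sample $X=(X_1,\ldots,X_\ell)\sim\mathcal D^\ell$ using concentration. Each squared difference $(L_I-L_J)^2$ lies in $[0,1]$ with mean $\mu_{I,J}\geq 256\gamma^2$ and variance at most $\mu_{I,J}$ (since its values are dominated by themselves on $[0,1]$), so Bernstein's inequality yields $\Pr[\|L_I-L_J\|_{\hat L_2}^2<\mu_{I,J}/2]\leq\exp(-\Omega(\ell\gamma^2))$. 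A union bound over the $\leq 2^{N/4}$ pairs then succeeds provided $\ell\geq\Omega(N/\gamma^2)$; on the resulting good sample, $\{L_I\}_{I\in\mathcal I}$ is a $2\gamma$-separated packing in empirical $L_2$ of size $2^{N/8}$, which forces any $\gamma$-cover to have size at least $2^{N/8}$, giving $\ln\mathcal N_2(\gamma,\mathcal L,\ell)\geq N/8$.

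The sample-size requirement $\ell\geq\Omega(N/\gamma^2)$ is where the uniform convergence hypothesis is invoked: standard lower bounds on the sample complexity of uniform convergence (e.g., Anthony--Bartlett Theorem~19.5) imply that $(\epsilon,\delta)$-uniform convergence with $\delta<1/100$ requires $\ell\geq\Omega(\mathrm{fatdim}_{c\epsilon}(\mathcal L)/\epsilon^2)$ for an absolute constant $c$, and together with the hypothesis $\gamma\geq 4\epsilon$ and monotonicity of the fat-shattering dimension in its parameter, this gives $\ell\geq\Omega(N/\epsilon^2)\geq\Omega(N/\gamma^2)$. The main technical obstacle I anticipate is the transfer step: the uniform convergence hypothesis directly controls empirical means of functions in $\mathcal L$, whereas we need empirical means of the squared differences $(L_I-L_J)^2$, which do not themselves belong to $\mathcal L$. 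The workaround is to apply Bernstein (rather than Hoeffding, to exploit the small-variance regime when $\mu_{I,J}\ll 1$) directly to the finite collection of squared differences, using the uniform convergence hypothesis only indirectly via the sample-size lower bound it forces. Constants in the exponents and the precise definition of $\mathcal N_2(\gamma,\mathcal L,\ell)$ (supremum or sample-probability version) may need light adjustment, but the key ingredients---fat-shattering, Gilbert--Varshamov, and Bernstein---are classical.
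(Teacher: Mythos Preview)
Your overall strategy---build a packing from a Gilbert--Varshamov code on the $16\gamma$-fat-shattered set and invoke the uniform-convergence hypothesis only to ensure $\ell$ is large enough---matches the paper's approach. The paper simply packages the packing step as a citation (Anthony--Bartlett Lemma~10.5 and Theorem~12.10, which state precisely that $\ell \geq \mathrm{fatdim}_{16\gamma}(\mathcal L)$ implies the desired covering-number lower bound), whereas you re-derive it by hand.

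There is, however, a gap in your re-derivation. Your Bernstein-plus-union-bound transfer requires $\ell \geq \Omega(N/\gamma^2)$, and you try to supply this via Anthony--Bartlett Theorem~19.5 in the form $\ell \geq \Omega(\mathrm{fatdim}_{c\epsilon}(\mathcal L)/\epsilon^2)$. That is not what Theorem~19.5 says: as the paper itself quotes, for $(\tfrac12\gamma,\delta)$-learning it yields only $\ell \geq \tfrac{1}{16\alpha}(\mathrm{fatdim}_{\gamma/(2\alpha)}(\mathcal L)-1)$, i.e., $\ell \geq \Omega(N)$ with no multiplicative $1/\epsilon^2$ factor. (An $\Omega(N/\epsilon^2)$ bound for uniform convergence is exactly what the surrounding section is in the process of establishing, so invoking it here would be circular.) The fix is to drop the random-sample concentration step entirely: since $\mathcal N_2(\gamma,\mathcal L,\ell)$ is a supremum over samples of size $\ell$, you only need to exhibit \emph{one} good sample. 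Given $\ell \geq N$, take the sample to be the shattered set $Y$ with each point repeated $\lfloor \ell/N \rfloor$ times (and padded arbitrarily); then the empirical $L_2$ metric agrees up to a constant factor with the population $L_2$ under $\mathrm{Unif}(Y)$, and your $256\gamma^2$ separation transfers directly without any randomness or Bernstein. This is precisely the argument underlying the Theorem~12.10 that the paper cites, and it needs only $\ell \geq N$, which Theorem~19.5 does provide.
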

\begin{proof}
Let $\ell'$ be the number of samples required for $(\tfrac12\gamma,\delta)$-learning $\mathcal L$ (see \Cref{sec:guptaroughgarden} for the definition of $(\epsilon,\delta)$-learning). Theorem 19.5 in \cite{anthony2009neural} gives the lower bound $\ell'\geq\tfrac1{16\alpha}(\mathrm{fatdim}_{\gamma/(2\alpha)}(\mathcal L)-1)$ for any $0<\alpha<1/4$. Setting $\alpha=1/32$ and using $\mathrm{fatdim}_{16\gamma}(\mathcal L)>1$ yields $\ell'\geq2\cdot\mathrm{fatdim}_{16\gamma}(\mathcal L)-2 \geq \mathrm{fatdim}_{16\gamma}(\mathcal L)$. Since $(\epsilon,\delta)$-uniform convergence implies $(2\epsilon,\delta)$-learning and thus $(\tfrac12\gamma,\delta)$-learning (as $\gamma\geq4\epsilon$), we have $\ell\geq\ell'\geq \mathrm{fatdim}_{16\gamma}(\mathcal L)$. Now we can apply Lemma 10.5 and Theorem 12.10 from \cite{anthony2009neural}, which yield $\ln\mathcal N_2(\gamma,\mathcal L,\ell) \geq \ln\mathcal N_1(\gamma,\mathcal L,\ell) \geq \tfrac18\cdot\mathrm{fatdim}_{16\gamma}(\mathcal L)$.
\end{proof}
Putting together everything so far (i.e., combining \Cref{eq:expuni,eq:desymm,eq:sudakov,clm:coveringnumber}), we get
\[
  2\epsilon \geq \frac12\E_{\mathcal D^\ell} \left[\frac{\alpha_1}{8\sqrt\ell} \cdot \left(\ln\left(2+\frac{\alpha_2}{\Psi_\ell(\mathcal L_c)}\right)\right)^{-1/2} \cdot \sup_{\gamma\in[4\epsilon,1)}\gamma \sqrt{\mathrm{fatdim}_{16\gamma}(\mathcal L)}\right] ,
\]
provided that $\mathrm{fatdim}_{16\gamma}(\mathcal L)>1$ (as needed for \Cref{clm:coveringnumber}).

The derivations so far have been for any $\mathcal L$ that admits $(\epsilon,\epsilon)$-uniform convergence with $\ell$ samples. Now we begin specializing the arguments for IVY, that is, for $\mathcal L=\mathcal L_{\mathrm{IVY}}$. 
We choose $\gamma=1/(64\sqrt{k})$. Note that the assumption in \Cref{thm:samples} is that $\epsilon\leq1/(256\sqrt{k})$, ensuring that $\gamma\geq4\epsilon$. By the lower bound in \Cref{thm:main}, $\mathrm{fatdim}_{16\gamma}(\mathcal L)=\Omega(ns)$. Plugging these above, we get
\begin{equation}\label{eq:epspsi}
  \epsilon \geq \Omega(1) \cdot \E_{\mathcal D^\ell} \left[ \sqrt{\frac{ns}{\ell k} \cdot \left(\ln\left(2+\frac{\alpha_2}{\Psi_\ell(\mathcal L_c)}\right)\right)^{-1}} \right] .
\end{equation}
It remains to bound $\Psi_\ell(\mathcal L_c)$ from below. We show it is lower-bounded by a constant even for very simple input distributions for IVY, supported on only two matrices (and indeed, for any distribution such that there is a loss function $L\in\mathcal L$ with loss $0$ on half the distribution mass and loss $1$ on the other half). Recall that
\[
  \Psi_\ell(\mathcal L_c) := \sup_{L'\in\mathcal L_c}\sqrt{\frac1\ell\sum_{i=1}^\ell(L'(x_i))^2} = \sup_{L\in\mathcal L}\sqrt{\frac1\ell\sum_{i=1}^\ell(L(x_i)-z(L))^2} .
\]
Let $e_1,\ldots,e_n\in\R^n$ be the standard basis in $\R^n$. Let $A_0,A_0',A_1,A_1'\in\R^{n\times d}$ be defined as follows: the first $k$ columns of $A_0'$ are $e_1,\ldots,e_k$, and the rest are zero; the first $k$ columns of $A_1'$ are $e_{k+1},\ldots,e_{2k}$, and the rest are zero; $A_0=\frac1{\sqrt k}A_0'$; and $A_1=\frac1{\sqrt k}A_1'$. Note that $\fsnorm{A_0}=\fsnorm{A_1}=1$. Let $\mathcal D$ be the uniform distribution over $\{A_0,A_1\}$. Let $L_0\in\mathcal L$ be the IVY loss function induced by the sketching matrix $S_0\in\R^{k\times n}$ whose rows are $e_1^T,\ldots,e_k^T$. (In the notation of the previous sections, $L_0=L_k^{\mathrm{SCW}}(S_0,\cdot)\in\mathcal L_{\mathrm{IVY}}$.) It is not hard to see that $L_0(A_0)=0$ and $L_0(A_1)=1$. In particular, $z(L_0)=\tfrac12$. Therefore, for any sample $x_1,\ldots,x_\ell$ from $\mathcal D$,
\[
  \Psi_\ell(\mathcal L_c) \geq \sqrt{\frac1\ell\sum_{i=1}^\ell(L_0(x_i)-z(L_0))^2} = \sqrt{\frac1\ell\sum_{i=1}^\ell\left(\frac12\right)^2} = \frac12.
\]
The proof is now easily completed. Plugging the above bound on $\Psi_\ell(\mathcal L_c)$ into \Cref{eq:epspsi} yields
$\epsilon \geq \Omega(1) \cdot \E_{\mathcal D^\ell}\left[\sqrt{ns/(\ell k)}\right]$. Since $n,s,\ell,k$ are constants w.r.t.~the sample from $ D^\ell$, we can dispense with the expecation and write $\epsilon \geq \Omega(\sqrt{ns/(\ell k)})$. Rearranging yields $\ell \geq \Omega(\epsilon^{-2}ns/k)$.

\subsection{General Learning Lower Bound}
The lower bound $\Omega(\epsilon^{-1}+ns)$ on the number of samples needed for $(\epsilon,\delta)$-learning IVY with any learning procedure follows by plugging the lower bound on the fat shattering dimension from \Cref{thm:main} into Theorem 19.5 in \cite{anthony2009neural}.

\section{Connection to Prior Work}\label{app:related}
In this section we discuss the connection of our techniques to prior techniques for proving statistical generalization bounds for data-driven algorithms, and specifically to \cite{gupta2017pac} and \cite{balcan2021much}. The goal is to place our work in the context of related work, and also to explain why previous techniques do not give useful generalization bounds for the linear algebraic algorithms considered in this paper. 

\subsection{Illustrative Example 1: Learning Greedy Heuristics for Knapsack}\label{app:knapsack}
\cite{gupta2017pac} discuss learned heuristics for the Knapsack problem as an illustrative example for their statistical learning framework. 
Recall that in the Knapsack problem, the input is $n$ items with values $v_1,\ldots,v_n>0$, respective costs $c_1,\ldots,c_n>0$, and a cost limit $C>0$. The goal is to return $I\subset\{1,\ldots,n\}$ that maximizes the total value $\sum_{i\in I}v_i$ under the cost constraint $\sum_{i\in I}c_i<C$. This problem is well-known to be NP-hard.

\cite{gupta2017pac} consider the following family of greedy heurstics for Knapcask. Given a parameter $\rho\in R$, define the rank of item $i$ as $v_i/c_i^\rho$, and let $L_\rho$ be the greedy heuristic that adds the highest ranked items to $I$ as long as the cost limit is not exceeded. They then let $\rho$ be a learnable parameter, and prove that the pseudo-dimension of the class of heuristics $\mathcal L=\{L_\rho\}_{\rho\in\R}$ is $O(\log n)$.

The crux of the proof is the observation that for a given instance, the ``loss'' (or in this case the utility, since this is a combinatorial maximization problem)
of a solution $I$ is fully determined by the result of the ${n\choose2}$ comparison $v_i/c_i^\rho\geq^{?}v_j/c_j^\rho$, or equivalently $\rho\geq^{?}\log(v_j/v_i)/\log(c_j/c_i)$. This means that the parameter space $\R$ is partitioned into ${n\choose2}$ intervals such that the ``loss'' (utility) of the given instance is constant on each interval. The pseudo-dimension can be then bounded by the log of the number of intervals. The duality-based framework developed by \cite{balcan2021much} is a vast generalization of this argument, and recovers the same bound for Knapsack. 

The GJ framework presented in \Cref{sec:GJ} recovers it as well, for the same underlying reason. Since the ``loss'' (utility) of a given instance is determined by the ${n\choose2}$ comparisons $\rho\geq^{?}\log(v_j/v_i)/\log(c_j/c_i)$, which are polynomial predicates of degree $1$ in the variable $\rho$, it can be computed by a GJ algorithm with degree $1$ and predicate complexity ${n\choose 2}$. The upper bound $O(\log n)$ on the pseudo-dimension thus follows from \Cref{thm:GJ}.\footnote{Note that each heuristic $L_\rho$ is specified by the single real parameter $\rho$, so $n$ in the notation of \Cref{thm:GJ} is $1$.}

\subsection{Illustrative Example 2: IVY in the Case $m=k=1$}
In order to compare our techniques with the duality-based framework of \cite{balcan2021much}, it is instructive to consider the simple case $m=k=1$ of IVY. 
In \Cref{sec:mk1}, we proved a tight bound of $O(n)$ on the pseudo-dimension of IVY in this case. 

Our proof showed that for a given input matrix $A\in\R^{n\times d}$, the loss equals $\fsnorm{A}$ if the sketching vector $w\in\R^n$ satisfies $w^TA=0$, and equals $0$ otherwise. By the main definition of \cite{balcan2021much}, this means that the dual class of losses is $(\mathcal F,\mathcal G,d)$-piecewise decomposable, where $\mathcal F$ is the class of constant-valued functions, $\mathcal G$ is the class of $n$-dimensional linear threshold functions, and  $d$ is the column-dimension of $A$ (note that $d$ is the number of functions from $\mathcal G$ involved in the condition $w^TA=0$). Denoting the dual classes of $\mathcal F$ and $\mathcal G$ by $\mathcal F^*$ and $\mathcal G^*$ respectively (see \cite{balcan2021much} for the definition of dual classes), we have $\mathrm{pdim}(\mathcal F^*)=0$ and $\mathrm{VCdim}(\mathcal G^*)=n+1$. Therefore, the main theorem of \cite{balcan2021much} gives a bound of $O(n\log n)$ on the pseudo-dimension of IVY with $m=k=1$, which is looser than the tight bound by $\log n$. 

We remark that since the condition $w^TA=0$ is equivalent to $\norm{w^TA}^2=0$, the dual class is also $(\mathcal F,\mathcal G_2,1)$-piecewise decomposable where $\mathcal G_2$ is a class of quadratic threshold functions in $n$ variables. It can be checked that $\mathrm{VCdim}(\mathcal G_2^*)=\tfrac12(n+1)(n+2)$, leading to an even looser bound of $O(n^2\log n)$ on the pseudo-dimension.

\subsection{The General Case}
Finally, let us point out a formal connection between the GJ framework and the duality framework of \cite{balcan2021much}. The proof of \Cref{thm:GJ} in fact shows that if a class of algorithms admits a GJ algorithm with degree $\Delta$ and predicate complexity $p$ for computing the loss, then the dual class as defined by \cite{balcan2021much} is $(\mathcal F,\mathcal G,p)$-piecewise decomposable, where $\mathcal F$ is the class of constant-valued functions, and $\mathcal G$ is the class of polynomial threshold functions of degree $\Delta'=O(\Delta)$ in $n$ variables (where $n$ is the number of parameters that specify an algorithm, as in \Cref{thm:GJ}). The VC-dimension of the dual class $\mathcal G^*$ can be upper-bounded by the number of monomials in $n$ variables of degree at most $\Delta'$, which is ${n+\Delta'\choose n}$.\footnote{The VC-dimension of the dual class could in principle be even smaller, if the primal class has a simpler structure than the dual class. However, in typical scenarios the primal class is less nicely structured than the dual class, which is the motivation for the work of \cite{balcan2021much}. Furthermore, if the primal class is indeed simpler, then there is no reason to go through duality at all.} Therefore, the main theorem of \cite{balcan2021much} implies an upper bound of $O\left({n+\Delta\choose n}\log\left({n+\Delta\choose n}\cdot p\right)\right)$ on the pseudo-dimension. Unfortunately, this is typically much looser than the bound $O(n\log(\Delta p))$ in \Cref{thm:GJ}. On the other hand, the result of \cite{balcan2021much} is much more general, and can handle decomposability beyond constant functions $\mathcal F$ and polynomial thresholds $\mathcal G$.

Specifically for IVY, the main component in our proof of \Cref{thm:main} was a GJ algorithm of degree $\Delta=O(mk(d/\epsilon)^{O(1/\epsilon)})$ and predicate complexity $p\leq2^m\cdot2^k\cdot(ed/k)^{3k}$ for computing the proxy loss (\Cref{clm:complexity}). 
While the main result of \cite{balcan2021much} can technically be applied here as described above, the pseudo-dimension upper bound it gives is super-polynomial in $n$ (as opposed to the linear dependence on $n$ in \Cref{thm:main}), so it does not seem to be useful in our setting.

\end{document}